\definecolor{goldenrod}{RGB}{218, 165, 32}
\newtheorem{proposition}{Proposition}
\newtheorem{definition}{Definition}
\newtheorem{corollary}{Corollary}
\definecolor{darkgreen}{rgb}{0.0, 0.5, 0.0}
\definecolor{peach}{rgb}{1.0, 0.85, 0.7}
\definecolor{lightred}{rgb}{1.0, 0.6, 0.6}
\definecolor{deepred}{rgb}{0.6, 0.0, 0.0}
\title{Enhancing Graph Transformers with Hierarchical Distance Structural Encoding}
\author{%
  Yuankai Luo\(^{1,3}\) \\
  \And
  Hongkang Li\(^2\) 
  \And
  Lei Shi\(^1\)\thanks{Corresponding authors.}
  \And
  Xiao-Ming Wu\(^3\)\footnotemark[1] \\
  \AND
  \textnormal{\(^1\)Beihang University} \And
  \textnormal{\(^2\)Rensselaer Polytechnic Institute} 
  \AND
  \textnormal{\(^3\)The Hong Kong Polytechnic University} 
  \AND 
  \textnormal{\texttt{luoyk@buaa.edu.cn}} \And
  \textnormal{\texttt{xiao-ming.wu@polyu.edu.hk}}
}
\begin{document}

\maketitle

\begin{abstract}
  
Graph transformers need strong inductive biases to derive meaningful attention scores. Yet, current methods often fall short in capturing longer ranges, hierarchical structures, or community structures, which are common in various graphs such as molecules, social networks, and citation networks. This paper presents a Hierarchical Distance Structural Encoding (HDSE) method to model node distances in a graph, focusing on its multi-level, hierarchical nature. We introduce a novel framework to seamlessly integrate HDSE into the attention mechanism of existing graph transformers, allowing for simultaneous application with other positional encodings. To apply graph transformers with HDSE to large-scale graphs, we further propose a high-level HDSE that effectively biases the linear transformers towards graph hierarchies. We theoretically prove the superiority of HDSE over shortest path distances in terms of expressivity and generalization. Empirically, we demonstrate that graph transformers with HDSE excel in graph classification, regression on 7 graph-level datasets, and node classification on 11 large-scale graphs, including those with up to a billion nodes. Our
implementation is available at \url{https://github.com/LUOyk1999/HDSE}.
\end{abstract}

\vspace{-0.2 in}
\section{Introduction}

The success of Transformers \cite{vaswani2017attention} in various domains, including natural language processing (NLP) and computer vision \cite{dosovitskiy2020image}, has sparked significant interest in developing transformers for graph data~\cite{dwivedi2020generalization,ying2021transformers,kreuzer2021rethinking,chen2022structure,rampavsek2022recipe,ma2023graph,zhang2023rethinking,wu2023simplifying}. 
Scholars have turned their attention to this area, aiming to address the limitations of Message-Passing Graph Neural Networks (MPNNs) \cite{gilmer2017neural} such as over-smoothing \cite{li2018learning} and over-squashing~\cite{alon2020bottleneck, topping2021understanding}. 


However, Transformers \cite{vaswani2017attention} are known for their lack of strong inductive biases \cite{dosovitskiy2020image}. In contrast to MPNNs, graph transformers do not rely on fixed graph structure information. Instead, they compute pairwise interactions for all nodes within a graph and represent positional and structural data using more flexible, soft inductive biases. Despite its potential, this mechanism does not have the capability to learn hierarchical structures within graphs. Developing effective positional encodings is also challenging, as it requires identifying important hierarchical structures among nodes, which differ significantly from other Euclidean domains~\cite{bronstein2021geometric}. Consequently, graph transformers are prone to overfitting and often underperform MPNNs when data is limited~\cite{ma2023graph}, especially in tasks involving large graphs with relatively few labeled nodes~\cite{wu2023simplifying}. These challenges become even more significant when dealing with various molecular graphs, such as those found in polymers or proteins. These graphs are characterized by a multitude of  substructures and exhibit long-range and hierarchical structures. The inability of graph transformers to learn these hierarchical structures can significantly impede their performance in tasks involving such complex molecular graphs.

Further, the global all-pair attention mechanism in transformers poses a significant challenge due to its time and space complexity, which increases quadratically with the number of nodes. This quadratic complexity significantly restricts the application of graph transformers to large graphs, as training them on graphs with millions of nodes can require substantial computational resources. Large-scale graphs, such as social networks and citation networks, often exhibit community structures characterized by closely interconnected groups with distinct hierarchical properties. To enhance the scalability and effectiveness of graph transformers, it is crucial to incorporate hierarchical structural information at various levels.

\begin{figure*}[t]   \center{\includegraphics[width=14.0cm]  {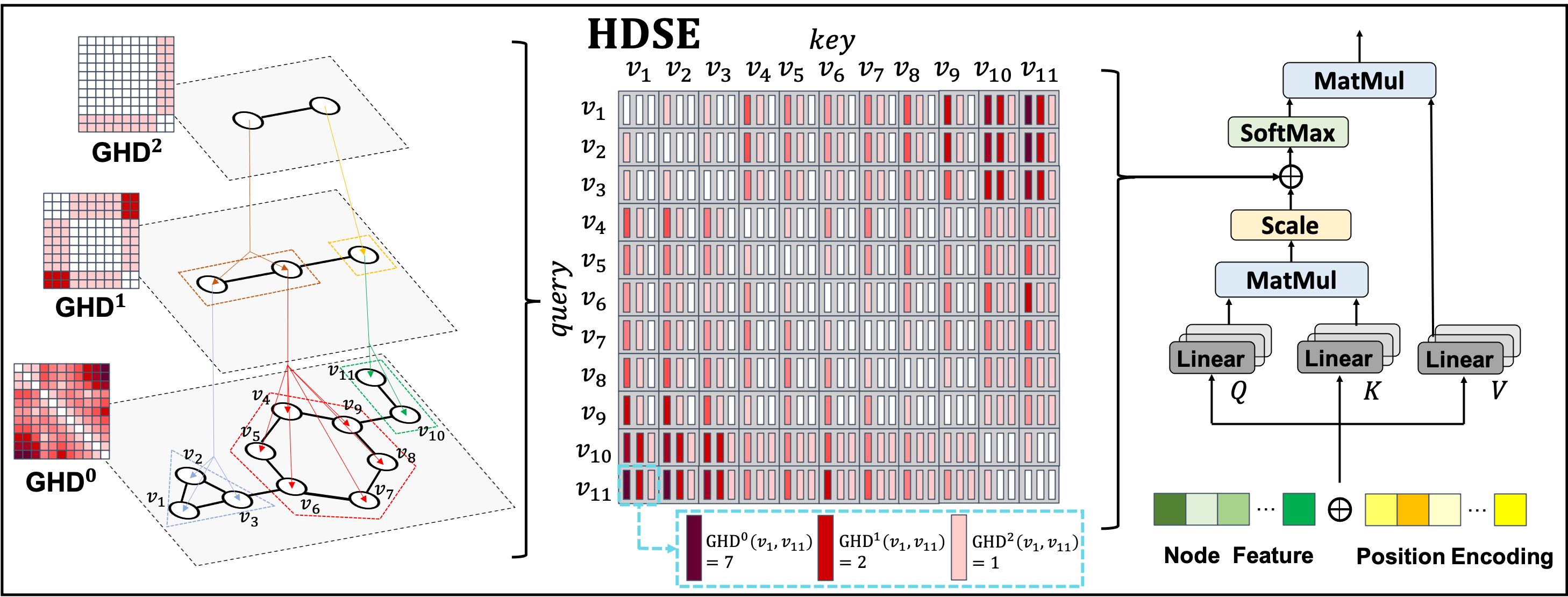}}   \vspace{-0.1 in} \caption{\label{1} 
Overview of our proposed hierarchical distance structural encoding (HDSE) and its integration with graph transformers. HDSE uses the graph hierarchy distance (GHD, refer to Definition \ref{def1}) that can capture interpretable patterns in graph-structured data by using diverse graph coarsening algorithms.
Darker colors indicate longer distances.}  \vspace{-0.1 in} \label{fig:architecture} \end{figure*}

To address the aforementioned challenges and unlock the true potential of the transformer architecture in graph learning, we propose a Hierarchical Distance Structural Encoding (HDSE) method (\textbf{Sec.~\ref{subsec:HDSE}}), which can be combined with various graph transformers to produce more expressive node embeddings. HDSE encodes the hierarchy distance, a metric that measures the distance between nodes in a graph, taking into account multi-level graph hierarchical structures. We utilize popular coarsening methods~\cite{karypis1998software,ng2001spectral,girvan2002community,blondel2008fast,loukas2019graph} to construct graph hierarchies, enabling us to measure the distance relationship between nodes across various hierarchical levels.

HDSE enables us to incorporate a robust inductive bias into existing transformers and address the issue of lacking canonical positioning. To achieve this, we introduce {a novel framework} (\textbf{Sec.~\ref{subsec:GT_HDSE}}), as illustrated in Figure \ref{fig:architecture}. We utilize an end-to-end trainable function to encode HDSE as structural bias weights into the attentions, allowing the graph transformer to integrate both HDSE and other positional encodings simultaneously. Our theoretical analysis demonstrates that \emph{{graph transformers equipped with HDSE are significantly more powerful than the ones with the commonly used shortest path distances, in terms of both expressiveness and generalization}}. We rigorously evaluate our HDSE in ablation studies and show that it successfully improves different kinds of baseline transformers, from vanilla graph transformers \cite{dwivedi2020generalization} to state-of-the-art graph transformers \cite{rampavsek2022recipe, chen2022structure, zhang2022hierarchical, ma2023graph}, across 7 graph-level datasets.



To enable the application of graph transformers with HDSE to large graphs ranging from millions to billions of nodes, we introduce a high-level HDSE (\textbf{Sec.~\ref{subsec:HGT_HDSE}}), which effectively biases the linear transformers towards the multi-level structural nature of these large networks. 
We demonstrate our high-level HDSE method exhibits high efficiency and quality across 11 large-scale node classification datasets, with sizes up to the billion-node level.

\section{Background and Related Works}\label{sec:related}

We refer to a \emph{graph} as a tuple $G =(V, E, \mathbf{X})$, with node set $V$, edge set $E\subseteq V\times V$, and node features $\mathbf{X}\in \mathbb{R}^{|V|\times d}$. Each row in $\mathbf{X}$ represents the feature vector of a node, with $|V|$ denoting the number of nodes and feature dimension $d$. The features of node $v$ are denoted by $x_v\in\mathbb{R}^{d}$.

\subsection{Graph Hierarchies}\label{sec:hie} Given an input graph $G$, a graph hierarchy of $G$ consists of a sequence of graphs $(G^k,\phi_k)_{k\ge 0}$, where $G^0=G$ and  $\phi_k: V^k\to V^{k+1}$ are surjective node mapping functions. 
Each node $v^{k+1}_j \in V^{k+1}$ represents a \emph{cluster} of a subset of nodes $\{v^k_i\} \subseteq V^{k}$. This partition can be described by a projection matrix $\hat{P}^{k} \in \{0, 1\}^{{|V^{k}|\times |V^{k+1}|}}$, where $\hat{P}^{k}_{ij}=1$ if and only if $v^{k+1}_j = \phi_{k}(v^k_i)$. The normalized version can be defined by $P^k=\hat{P}^{k}{C^k}^{-{1/2}}$, where $C^k \in \mathbb{R}^{{|V^{k+1}|\times |V^{k+1}|}}$ is a diagonal matrix with its $j$-th diagonal entry being the cluster size of $v^{k+1}_j$. We define the node feature matrix $\mathbf{X}^{k+1}$ for $G^{k+1}$ by $\mathbf{X}^{k+1}={P^k}^{\top}\mathbf{X}^{k}$, where each row of $\mathbf{X}^{k+1}$ represents the average of all entries within a specific \emph{cluster}.
The edge set $E^{k+1}$ of $G^{k+1}$ is defined as $E^{k+1} = \{(u^{k+1}, v^{k+1}) | \exists v^k_r \in \phi_k^{-1}(u^{k+1}), v^k_s \in \phi_k^{-1}(v^{k+1}), \text{ such that } (v^k_r, v^k_s) \in E^k\}$. 

Graph hierarchies can be constructed by repeatedly applying graph coarsening algorithms. These algorithms take a graph, $G^k$, and generate a mapping function $\phi_k: V^k \rightarrow V^{k+1}$, which maps the nodes in $G^k$ to the nodes in the coarser graph $G^{k+1}$. A summary and comparison of popular graph coarsening algorithms, along with their computational complexities, can be found in Table~\ref{tab:comparison}. We define the \emph{coarsening ratio}  as $\alpha = \frac{|V^{k+1}|}{|V^k|}$, which represents the proportion of the number of nodes in the coarser graph $G^{k+1}$ to the number of nodes in the original graph $G^k$. Consequently, each graph $G^{k}$, where $k>0$, captures specific substructures derived from the preceding graph. 

\begin{table}
	\centering
        \footnotesize
        \caption{Comparison of popular graph coarsening algorithms.}
	\begin{tabular}{cccccc}
		\toprule
		Coarsening algorithm &METIS \cite{karypis1998software}  & Spectral \cite{ng2001spectral} & Loukas \cite{loukas2019graph} & Newman \cite{girvan2002community} & Louvain \cite{blondel2008fast}\\
        \midrule %
		Complexity & $O(|E|)$  & $O(|V|^3)$   & $O(|V|)$  & $O(|E|^2|V|)$  & $O(|V|\log |V|)$   \\
		\bottomrule
	\end{tabular}
	\label{tab:tab1}\label{tab:comparison}
  \vspace{-0.15 in}
\end{table}

\vspace{-0.05 in}
\subsection{Graph Transformers} 
{Transformers} \citep{vaswani2017attention} have recently 
gained significant attention in graph learning, due to their ability to learn intricate relationships that extend beyond the capabilities of conventional GNNs, and in a unique way. 
The architecture of these models primarily contain a \emph{self-attention} module and a feed-forward neural network, each of which is followed by a residual connection paired with a normalization layer. 
Formally, the self-attention mechanism involves projecting the input node features $\mathbf{X}$ using three distinct matrices $\mathrm{W}_\mathbf{Q}\in \mathbb{R} ^{d\times d'}$, $\mathrm{W}_\mathbf{K}\in \mathbb{R} ^{d\times d'}$ and $\mathrm{W}_\mathbf{V}\in \mathbb{R} ^{d\times d'}$, resulting in the representations for query ($\mathbf{Q}$), key ($\mathbf{K}$), and value ($\mathbf{V}$), which are then used to compute the output features described as follows:
 %
 \begin{equation*}\label{selfatt1}
 \mathbf{Q}=\mathbf{X}\mathrm{W}_\mathbf{Q}, \ \mathbf{K}=\mathbf{X}\mathrm{W}_\mathbf{K}, \ \mathbf{V}=\mathbf{X}\mathrm{W}_\mathbf{V},
 \end{equation*}
 \begin{equation}
 	\label{eq:transformer} \mathrm{Attention}\left(\mathbf{X}\right) =\mathrm{softmax} \left( \frac{\mathbf{QK}^{\top}}{\sqrt{d'}} \right) \mathbf{V}.
 \end{equation}
 \vspace{-0.15 in}
%

Technically, transformers can be considered as message-passing GNNs operating on fully-connected graphs, decoupled from the input graphs. 
The main research question in the context of graph transformers is how to incorporate the structural bias of the given input graphs 
by adapting the attention mechanism or by augmenting the original features. 
 The \textbf{Graph Transformer (GT)} \citep{dwivedi2020generalization} represents an early work using Laplacian eigenvectors as positional encoding (PE), and various extensions and alternative models have been proposed since then \citep{min2022transformer}. 
For instance, 
the \textbf{structure-aware transformer (SAT)}  \citep{chen2022structure} extracts a subgraph representation rooted at each node before computing the attention. 
Most initial works in the area focus on the classification of smaller graphs, such as molecules; yet, recently, \textbf{GraphGPS} \citep{rampavsek2022recipe} and follow-up works \cite{zhao2021gophormer,wu2022nodeformer,wu2023difformer,wu2023simplifying, chen2022nagphormer, pmlr-v202-kong23a, shirzad2023exphormer, deng2024polynormer, fu2024vcrgraphormer} also consider larger graphs. 

\subsection{Hierarchy in Graph Learning} 
In message passing GNNs, hierarchical pooling of node representations is a proven method for incorporating coarsening into reasoning \cite{bianchi2020spectral, gao2019graph, ying2018hierarchical, lee2019self, huang2019attpool, ranjan2020asap}. With GNNs, coarsened graph representations are further considered in the context of molecules \cite{jin2020hierarchical} and virtual nodes \cite{hwang2022an}. Additionally, HTAK~\cite{bai2022hierarchical} employ graph hierarchies to develop a novel graph kernel by transitively aligning the nodes across multi-level hierarchical graphs. The recent \textbf{HC-GNN} \cite{zhong2023hierarchical} demonstrates competitive performance in node classification on large-scale graphs, utilizing hierarchical community structures for message passing.

In graph transformers, there are currently only a few hierarchical models. \textbf{ANS-GT}~\cite{zhang2022hierarchical} use adaptive node sampling in their graph transformer, enabling it for large-scale graphs and capturing long-range dependencies. Their model groups nodes into super-nodes and allows for interactions between them.
Similarly, \textbf{HSGT} \cite{zhu2023hierarchical} aggregates multi-level graph information and employs graph hierarchical structure to construct intra-level and inter-level transformer blocks. The intra-level block facilitates the exchange and transformation of information within the local context of each node, while the inter-level block adaptively coalesces every substructure present. Our concurrent work directly incorporates hierarchy into the attention, a fundamental building block of the transformer architecture, making it flexible and applicable to existing graph transformers.
Additionally, \textbf{Coarformer} \cite{kuang2021coarformer} utilizes graph coarsening techniques to generate coarse views of the original graph, which are subsequently used as input for the transformer model. 
Likewise, 
PatchGT \cite{gao2022patchgt} starts by segmenting graphs into patches using spectral clustering and then learns from these non-trainable graph patches. \textbf{MGT} \cite{ngo2023multiresolution} learns atomic representations and groups them into meaningful clusters, which are then fed to a transformer encoder to calculate the graph representation. However, these approaches typically yield coarse-level representations that lack comprehensive awareness of the original node-level features~\cite{jiang2023agformer}. In contrast, our model integrates hierarchical information from a broader distance perspective, thereby avoiding the oversimplification in these coarse-level representations. 

\section{Our Method}

\subsection{Hierarchical Distance Structural Encoding (HDSE)}\label{subsec:HDSE}

Firstly, we introduce a novel concept called \emph{graph hierarchy distance} (GHD), which is defined as follows.
\begin{definition} [\textbf{Graph Hierarchy Distance}] \label{def1}
    Given two nodes $u, v$ in graph $G$, and a graph hierarchy $(G^i,\phi_i)_{i\ge 0}$, 
    the $k$-level  hierarchy distance between $u$ and $v$ is defined as
     \begin{align}\label{ghd}
\mathrm{GHD}^0\left( u,v \right) &= \mathrm{SPD}\left( u,v \right),
    \nonumber \\
    \mathrm{GHD}^k\left( u,v \right) &= \mathrm{SPD}\left( \phi_{k-1}...\phi_{0}(u), \phi_{k-1}...\phi_{0}(v) \right),
 \end{align}
    where $\mathrm{SPD} \left(\cdot , \cdot \right)$ is the shortest path distance between two nodes ($\infty$ if the nodes are not connected), and $\phi_{k-1}...\phi_{0}(\cdot)$ maps a node in $G^0$ to a node in $G^k$.   
\end{definition}

Note that the $k$-level hierarchy distance adheres to the definition of a metric, being zero for \(v = u\), invariably positive, symmetric, and fulfilling the triangle inequality. As illustrated on the left side of Figure \ref{fig:architecture}, it can be observed that $\mathrm{GHD}^0\left( v_1,v_{11} \right) = 7$, whereas $\mathrm{GHD}^1\left( v_1,v_{11} \right) = 2$.

Graph hierarchies have been observed to assist in identifying community structures in graphs that exhibit a clear property of tightly knit groups, such as social networks and citation networks \cite{girvan2002community}. They may also aid in prediction over graphs with a clear hierarchical structure, such as metal–organic frameworks or proteins. Fig.~\ref{fig:HDSE} shows that with the graph hierarchies generated by the Newman coarsening method, $\mathrm{GHD}^1$ is capable of capturing chemical motifs, including CF3 and aromatic rings.

\begin{figure}[t]   \center{\includegraphics[width=14cm]  {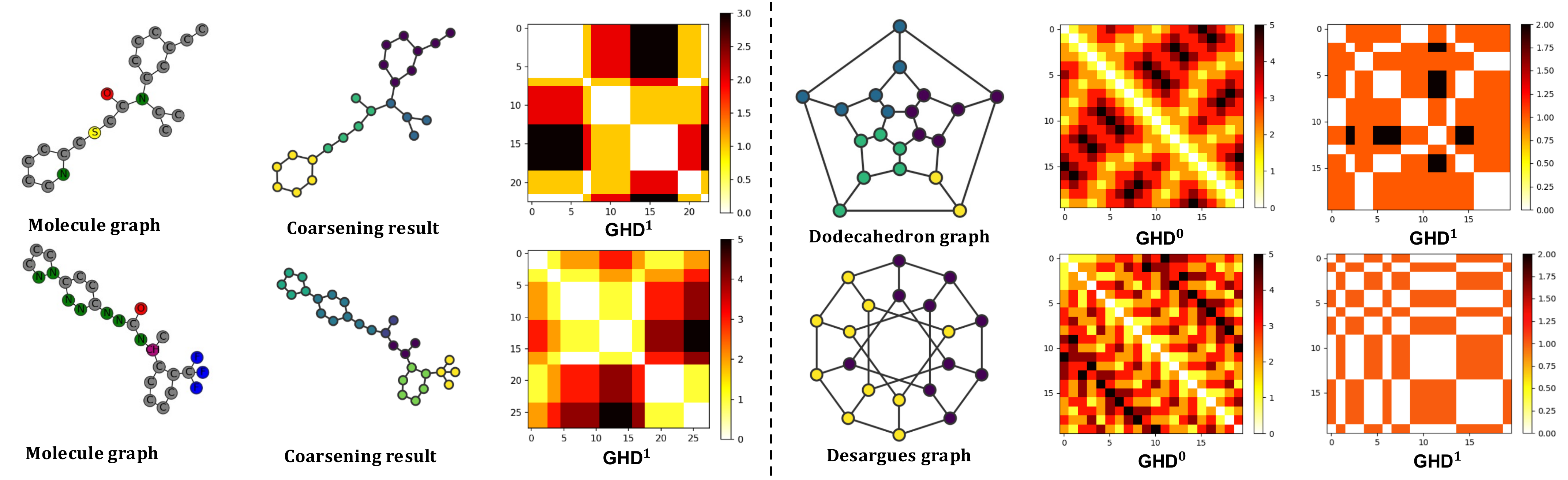}}   \caption{\label{1} Examples of graph coarsening results and hierarchy distances. Left: HDSE can capture chemical motifs such as CF3 and aromatic rings on molecule graphs. Right: HDSE can distinguish the Dodecahedron and Desargues graphs. The Dodecahedral graph has 1-level hierarchy distances of length 2 (indicated by the dark color), while the Desargues graph doesn’t. In contrast, the GD-WL test with SPD cannot distinguish these graphs \cite{zhang2023rethinking}.}\label{fig:HDSE} \vspace{-0.1 in} \end{figure}




Based on GHD, we propose \emph{hierarchical distance structural encoding} (HDSE), defined for each pair of nodes $i, j \in V$:
\begin{equation}
\mathrm{D}_{i,j}=\left[ \mathrm{GHD}^0,\mathrm{GHD}^1,...,\mathrm{GHD}^{K} \right] _{i,j}\in \mathbb{R} ^{K+1},\label{eqn: HDSE D_ij}
\end{equation}
where $\mathrm{GHD}^k$ is the $k$-level hierarchy distance matrix, 
and $K \in \mathbb{N}$ controls the maximum level of hierarchy considered.

We demonstrate the superior expressiveness of HDSE over SPD using recently proposed graph isomorphism tests inspired by the Weisfeiler-Leman algorithm \cite{weisfeiler1968reduction}. In particular, \cite{zhang2023rethinking} introduced the Generalized Distance Weisfeiler-Leman (GD-WL) Test 
and applied it to analyze a graph transformer architecture that employs $\mathrm{SPD}(i, j)$ as relative positional encodings. They proved that the graph transformer's maximum expressiveness is the GD-WL test with SPD. Here, we also use the GD-WL test to showcase the expressiveness of HDSE.

\begin{proposition}[\textbf{Expressiveness of HDSE}] \label{th1}
    GD-WL with HDSE $(\mathrm{D}_{i,j}) $ is strictly more expressive than GD-WL with the shortest path distance $\mathrm{SPD}(i, j)$.
\end{proposition}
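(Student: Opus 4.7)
The plan is to decompose the claim into two pieces: (i) GD-WL with HDSE is at least as expressive as GD-WL with SPD, and (ii) there exist non-isomorphic graphs distinguishable by GD-WL with HDSE but not by GD-WL with SPD.

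For (i), I would exploit the fact that $\mathrm{GHD}^0(u,v) = \mathrm{SPD}(u,v)$ is literally the first coordinate of the HDSE vector $\mathrm{D}_{u,v}$. Thus any colour refinement step of GD-WL driven by SPD can be reproduced by a GD-WL step driven by HDSE via a fixed projection onto the first coordinate. Formally, one shows by induction on the iteration count $t$ that the colouring produced by GD-WL with HDSE is a refinement of the colouring produced by GD-WL with SPD: if two nodes share the same HDSE colour at step $t$, then in particular the multiset of pairs (neighbour colour, $\mathrm{GHD}^0$) is determined, so their SPD colours also agree. Hence every pair of graphs separated by GD-WL with SPD is separated by GD-WL with HDSE.

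For (ii), the strict inequality, I would exhibit a concrete witness pair, reusing the Dodecahedron and Desargues graphs highlighted in Figure~\ref{fig:HDSE}. By the analysis of \cite{zhang2023rethinking}, GD-WL with SPD fails to distinguish these two graphs (they have identical multisets of SPD-coloured node aggregates). To show that HDSE separates them, I would fix a coarsening $\phi_0$ (e.g.\ the Newman community partition suggested in the figure caption) and compute the multiset of $\mathrm{GHD}^1$ distances in both graphs. The point is that on the Dodecahedron, the coarsened quotient contains pairs of clusters at shortest-path distance $2$, yielding $\mathrm{GHD}^1(u,v)=2$ for some node pairs, whereas for the Desargues graph any comparable coarsening yields a quotient in which no such distance-$2$ pattern occurs at the corresponding cluster pairs. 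Since the multiset of HDSE labels $\{\!\{\mathrm{D}_{u,v}\}\!\}$ therefore differs between the two graphs, GD-WL with HDSE distinguishes them at iteration $1$.

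Combining the two parts, GD-WL with HDSE refines GD-WL with SPD and strictly separates more graphs, which gives the strict inequality claimed. The main obstacle I anticipate is the bookkeeping in part (ii): one must explicitly verify for a specific coarsening $\phi_0$ that the coarsened distance multisets differ, and argue that the chosen $\phi_0$ is canonical enough to be invariant under graph isomorphism (so that GD-WL with HDSE is well-defined as a graph invariant). If the chosen coarsening algorithm is not deterministic up to isomorphism, a small additional argument is needed to use a canonical choice (e.g.\ considering all orbits of coarsenings, or working with a deterministic coarsening such as one based on a canonical spectral construction).
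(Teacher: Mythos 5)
Your proposal matches the paper's proof essentially step for step: part~(i) rests on the observation that $\mathrm{GHD}^0(u,v)=\mathrm{SPD}(u,v)$ is the first coordinate of $\mathrm{D}_{u,v}$, so the HDSE colouring refines the SPD colouring (the paper invokes Lemma~2 of \cite{bevilacqua2021equivariant} for the step from refined labels to refined multiset colourings), and part~(ii) uses the same Dodecahedron/Desargues witness pair under the Girvan--Newman coarsening, noting that the Dodecahedral graph exhibits $1$-level hierarchy distances equal to $2$ while the Desargues graph does not. Your closing caveat about the coarsening needing to be deterministic up to isomorphism for GD-WL with HDSE to be a well-defined graph invariant is a legitimate point that the paper's proof does not explicitly address, but it does not alter the substance of the argument.
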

The proof is provided in Appendix~\ref{th1-app}. Firstly, we show that the GD-WL test using HDSE can differentiate between any two graphs that can be distinguished by the GD-WL test with SPD. Next, we show that the GD-WL test with HDSE is capable of distinguishing the Dodecahedron and Desargues graphs (Figure \ref{fig:HDSE}) while the one with SPD cannot. 


\subsection{Integrating HDSE in Graph Transformers} \label{integrate}\label{subsec:GT_HDSE}
We integrate HDSE ($\mathrm{D}_{i,j}$) into the attention mechanism of each graph transformer layer to bias each node update. To achieve this, we use an end-to-end trainable function $\mathrm{Bias}:\mathbb{R} ^{K+1} \rightarrow \mathbb{R}$ to learn the biased structure weight $\mathrm{H}_{i,j} = \mathrm{Bias}(\mathrm{D}_{i,j})$. 
We limit the maximum distance length to a value~$L$, based on the hypothesis that detailed information loses significance beyond a certain distance. By imposing this limit, the model can extend acquired patterns to graphs of varying sizes not encountered in training. Specificially, we implement the function $\mathrm{Bias}$ using an MLP as follows:
\begin{align}
\mathrm{H}_{i,j}  &=\mathrm{MLP}\left( \left[ \mathbf{e}^0_{\mathrm{clip}^0_{i,j}},\cdots,\mathbf{e}^K_{\mathrm{clip}^K_{i,j}} \right] \right) \in \mathbb{R},
 \nonumber\\
\mathrm{clip}^k_{i,j}  &= \min \left( L, \mathrm{GHD}^k_{i,j} \right), 0\leq k \leq K,
\end{align}
where ${\left[ \mathbf{e}^k_0, \mathbf{e}^k_1, \cdots , \mathbf{e}^k_L \right]}_{0\leq k \leq K}\in \mathbb{R}^{d\times (L+1)}$ collects $L+1$ learnable feature embedding vectors $\mathbf{e}^k_i$ 
for hierarchy level $k$. By embedding the hierarchy distances at different levels into learnable feature vectors, it may enhance the aggregation of multi-level graph information among nodes and expands the receptive field of nodes to a larger scale. We assume single-headed attention for simplified notation, but when extended to multiheaded attention, one bias is learned per distance per head. 

We incorporate the learned biased structure weights $H$ to graph transformers, using the popular biased self-attention method proposed by \cite{ying2021transformers}, formulated as:
 \begin{equation}
 	\label{eq:transformer2}\mathrm{Attention}\left(\mathbf{X}\right) =\mathrm{softmax} \left( \mathrm{A}+ \mathrm{H} \right) \mathbf{V},  \mathrm{A} = \frac{\mathbf{QK}^{\top}}{\sqrt{d'}},\\
 \end{equation}
where the original attention score $\mathrm{A}$ is directly augmented with $\mathrm{H}$. This approach is backbone-agnostic and can be seamlessly integrated into the self-attention mechanism of existing graph transformer architectures. Notably, we have the following results on expressiveness and generalization.



\begin{proposition}\label{th2-1} The power of a graph transformer with HDSE to distinguish non-isomorphic graphs is at most equivalent to that of the GD-WL test with HDSE. With proper parameters and an adequate number of heads and layers, a graph transformer with HDSE can match the power of the GD-WL test with HDSE.
\end{proposition}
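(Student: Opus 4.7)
The plan is to mirror the two-direction argument of \cite{zhang2023rethinking} for GD-WL with SPD, adapting each direction to the richer HDSE setting. Concretely, I would split the claim into an upper bound (the transformer is no more powerful than the test) and a lower bound (the transformer can realize the test with enough capacity), and handle them separately.

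For the upper bound, I would proceed by induction on the layer index $\ell$. The inductive hypothesis is that for any two nodes $u, u'$, possibly from different graphs, if GD-WL with HDSE assigns them the same color at iteration $\ell$, then the transformer produces equal feature vectors $x_u^{(\ell)} = x_{u'}^{(\ell)}$. The key observation is that the layer-$\ell$ transformer update at $u$ depends only on the multiset of pairs $(x_v^{(\ell)}, \mathrm{D}_{u,v})$ ranging over $v \in V$: the attention logits are obtained from $x_u^{(\ell)}$, $x_v^{(\ell)}$ and $\mathrm{D}_{u,v}$ via the $\mathbf{QK}^{\top}$ term and the $\mathrm{Bias}$ MLP in Equation~\ref{eq:transformer2}, and the aggregated output is a symmetric function of the same multiset. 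Since this is exactly the aggregate the GD-WL test with HDSE hashes, equality of GD-WL colors at iteration $\ell$ forces equality of transformer features at layer $\ell$, and passing through the feed-forward block and the graph-level read-out preserves this property.

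For the lower bound, I would construct explicit parameters that let the transformer simulate one GD-WL iteration per layer. Because HDSE vectors are clipped to $\{0, 1, \dots, L\}^{K+1}$, the set of attainable values of $\mathrm{D}_{u,v}$ is finite; I assign one attention head to each such value and tune $\mathrm{Bias}$ so its output is a large constant $M$ on the chosen tuple and $-M$ elsewhere. Taking $\mathrm{W}_\mathbf{Q}=\mathrm{W}_\mathbf{K}=0$ so that only $\mathrm{H}$ drives the attention, as $M\to\infty$ the softmax of each head concentrates on exactly the nodes at the prescribed HDSE, so the head output becomes the average of their value vectors. Concatenating the head outputs therefore encodes, for each HDSE bucket, the distribution of neighbor colors. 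A feed-forward MLP following the attention block then computes, by universal approximation, an injective hash of this encoding, matching the GD-WL update at node $u$.

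The main obstacle I expect is the injectivity step of the lower bound: one must argue that the stacked head outputs determine the GD-WL color update uniquely, even though each head contributes an average rather than a sum. The standard remedy is either to allocate an additional constant-valued head per HDSE bucket that recovers the bucket's cardinality, or to invoke a Deep-Sets/GIN-style argument that averaging becomes injective on multisets drawn from a finite alphabet once the cardinality is known. With the bucket size exposed, multiplying the per-bucket average by the cardinality reconstructs the sum, and an injective multiset encoding through the MLP completes the simulation, yielding the matching direction.
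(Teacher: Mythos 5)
Your proposal follows essentially the same route as the paper: the upper bound is the identical layer-wise induction showing the transformer update is a symmetric function of the multiset $\{(\mathrm{D}_{u,v}, \chi^{\ell}(v)) : v \in V\}$ and hence refined by the GD-WL hash, while for the lower bound the paper simply defers to Theorem E.3 of Zhang et al., whose construction (one head per distance value, bias-dominated softmax concentrating on each bucket, followed by an injective multiset encoder) is exactly what you spell out. Your explicit handling of the mean-versus-sum injectivity issue is a correct treatment of the detail that the cited proof also has to address, so nothing is missing.
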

See the proof in Appendix~\ref{th2-1-app}. This result provides a precise characterization of the expressivity power and limitations of graph transformers with HDSE. Combining Proposition \ref{th1} and \ref{th2-1} immediately yields the following corollary:

\begin{corollary}[\textbf{Expressiveness of Graph Transformers with HDSE}]\label{th2-2} There exists a graph transformer using HDSE (with fixed parameters), denoted as $\mathcal{M}$, such that $\mathcal{M}$ is more expressive than graph transformers with the same architecture using SPD, regardless of their parameters.
\end{corollary}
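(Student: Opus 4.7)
The plan is to combine the two ingredients directly: Proposition~\ref{th1} gives us a concrete separating pair of graphs (Dodecahedron vs.\ Desargues), and Proposition~\ref{th2-1} lets us transfer this separation from GD-WL to graph transformers in both directions --- existentially for the HDSE side and universally for the SPD side.

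First I would fix the witness $\mathcal{M}$. By Proposition~\ref{th1}, the GD-WL test with HDSE distinguishes two graphs $G_1, G_2$ (e.g.\ the Dodecahedron and Desargues graphs) that the GD-WL test with SPD cannot tell apart. By the constructive direction of Proposition~\ref{th2-1}, there exist fixed parameters and a sufficient number of heads and layers such that a graph transformer $\mathcal{M}$ using HDSE realizes the GD-WL-with-HDSE coloring on this pair, and in particular produces different graph-level outputs on $G_1$ and $G_2$. This gives the ``there exists'' part of the corollary.

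Next I would handle the universal statement over SPD transformers. The upper-bound direction of Proposition~\ref{th2-1} says the discriminative power of a graph transformer with a given distance encoding is at most that of the GD-WL test with the same encoding. Applying this with SPD in place of HDSE (which is legitimate because SPD is exactly the special case $\mathrm{GHD}^0$, so the proof of Proposition~\ref{th2-1} goes through verbatim, or alternatively because the argument there only uses the fact that the encoding is a pairwise distance-like function) yields: for every choice of parameters, a graph transformer with SPD assigns identical outputs to any pair of graphs that GD-WL with SPD cannot distinguish. Applied to $(G_1, G_2)$, no SPD-based transformer of the same architecture can separate them.

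Combining the two paragraphs finishes the corollary: $\mathcal{M}$ distinguishes $(G_1,G_2)$, whereas every same-architecture transformer with SPD fails to do so, and conversely Proposition~\ref{th2-1} ensures that HDSE transformers dominate SPD transformers on all other pairs as well (since GD-WL with HDSE is at least as powerful as GD-WL with SPD by the first half of Proposition~\ref{th1}). The main subtlety, and the only place I expect a careful argument, is in invoking the upper-bound half of Proposition~\ref{th2-1} for the SPD setting: one must verify that the bound is genuinely architecture-wide and does not implicitly rely on the specific structure of HDSE, and that ``the same architecture'' in the corollary is interpreted so that the only difference between $\mathcal{M}$ and its SPD counterparts is the distance encoding fed into the bias module of equation~(\ref{eq:transformer2}). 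Once that is pinned down, no further computation is needed.
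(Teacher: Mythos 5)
Your proposal is correct and follows exactly the route the paper intends: it combines the separating pair from Proposition~\ref{th1} (Dodecahedron vs.\ Desargues) with the matching lower bound and the GD-WL upper bound from Proposition~\ref{th2-1}, the latter instantiated with SPD as the special case $\mathrm{GHD}^0$. The paper states the corollary follows ``immediately'' from these two propositions without further proof, and your write-up is precisely that argument with the details (including the architecture-wide applicability of the upper bound) made explicit.
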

This is a fine-grained expressiveness result of graph transformers with HDSE. It demonstrate the superior expressiveness of HDSE over
SPD in graph transformers.

\begin{proposition}[\textbf{Generalization of Graph Transformers with HDSE}](Informal)\label{th3} 
    For a semi-supervised binary node classification problem, suppose the label of each node $i\in V$ is determined by node features in the ``\textbf{hierarchical core neighborhood}'' $S_*^i=\{j: \mathrm{D}_{i,j}=D^*\}$ for a certain $D^*$, where $\mathrm{D}_{i,j}$ is HDSE defined in (\ref{eqn: HDSE D_ij}). Then, a properly initialized one-layer graph transformer equipped with HDSE can learn such graphs with a desired generalization error, while using SPD cannot guarantee satisfactory generalization.  
\end{proposition}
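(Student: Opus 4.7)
The proof follows a feature-learning style analysis of gradient descent on a one-layer graph transformer with HDSE-biased attention. First, I would formalize the data model implicit in the proposition: fix a target hierarchy-distance vector $D^*\in\mathbb{R}^{K+1}$ and assume each node $i$ has a binary label $y_i$ that is a sign-of-linear function of a summary statistic of $\{x_j : j\in S_*^i\}$, possibly contaminated by features of nodes outside $S_*^i$. This mirrors the "discriminative vs.\ confusion pattern" setups used in recent feature-learning analyses of attention, but adapted so that the discriminative pattern is singled out \emph{by the HDSE encoding} rather than by feature geometry.

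Second, I would show that a properly initialized one-layer graph transformer with HDSE can, along its training trajectory, drive the bias $\mathrm{H}_{i,j}=\mathrm{MLP}(\mathrm{D}_{i,j})$ toward a spike at $\mathrm{D}_{i,j}=D^*$ and suppress it elsewhere. Because the MLP in Eq.~(\ref{eq:transformer2}) sees the full clipped hierarchy vector and its domain $\{0,\dots,L\}^{K+1}$ is finite, it can realize (up to any desired accuracy) the indicator $\mathbf{1}[\mathrm{D}_{i,j}=D^*]$. Once the bias concentrates, the softmax attention $\mathrm{softmax}(\mathrm{A}+\mathrm{H})$ effectively averages $\mathbf{V}$ over $S_*^i$, so the value and readout layers can recover $y_i$. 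A standard Rademacher or margin-based uniform-convergence bound over the finite bias class $\mathrm{Bias}:\{0,\dots,L\}^{K+1}\to\mathbb{R}$, combined with the contraction of attention-weighted sums, then yields a sample-complexity bound polynomial in $1/\epsilon$ and independent of the non-core noise level.

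Third, for the negative part on SPD, I would invoke the aliasing exhibited in Proposition~\ref{th1}: since GD-WL with SPD cannot distinguish graphs (or neighborhoods) that GD-WL with HDSE can, there exist configurations in which two distinct HDSE vectors $D^*$ and $D^\dagger$ collapse to the same SPD value, while the $D^\dagger$-neighborhood carries adversarially chosen features with opposite label. Any bias of the form $\mathrm{Bias}(\mathrm{SPD}_{i,j})$ must then assign identical attention weights to $S_*^i$ and its "phantom twin" $\{j:\mathrm{D}_{i,j}=D^\dagger\}$, pinning the population error to a constant floor independent of sample size. The Dodecahedron/Desargues pair from Figure~\ref{fig:HDSE} can serve as the concrete witness.

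The hard part, I expect, is the second step's optimization-dynamics argument: controlling the coupled learning of $\mathrm{Bias}$, $\mathrm{W}_\mathbf{Q}$, $\mathrm{W}_\mathbf{K}$, and $\mathrm{W}_\mathbf{V}$ through the softmax so that the bias indeed converges to the correct spike rather than a spurious configuration that mixes $D^*$ with nearby distance vectors. I would likely handle this by initializing $\mathrm{A}$ small relative to $\mathrm{H}$ so that bias learning dominates early attention dynamics, following the staged-dynamics strategy common in this literature. Alternatively, since the proposition is marked \emph{(Informal)}, a cleaner route is to bypass dynamics entirely and present the result as realizability plus uniform convergence: exhibit an HDSE-parameterization that attains zero training loss and bound the generalization gap via the finite effective bias class, while showing by the aliasing construction above that no SPD-parameterization can be realizable on the adversarial distribution.
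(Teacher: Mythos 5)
Your outline matches the paper's proof in its broad structure --- a feature-learning analysis for the positive part and an SPD-aliasing argument for the negative part --- but you miss the one observation that makes the paper's proof essentially a corollary rather than a new analysis. The paper does not re-derive any optimization dynamics: it re-encodes the HDSE vector $\mathrm{D}_{n,u}$ as a one-hot vector $\boldsymbol{c}_s \in \mathbb{R}^{Z}$ over the finite set of $Z$ realizable distance vectors, observes that this is a bijection playing exactly the role of the SPD one-hot encoding in \cite{li2023what}, and then inherits Theorem 4.1 of that work verbatim (SGD trajectory, batch size, iteration count, sample complexity, and generalization bound all carry over with the hierarchical core neighborhood $S_*^i$ in place of the SPD core neighborhood). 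The ``hard part'' you flag --- controlling the coupled learning of the bias and the attention matrices so the bias spikes at $D^*$ --- is precisely what this reduction avoids having to redo. Your fallback route (realizability of the indicator $\mathbf{1}[\mathrm{D}_{i,j}=D^*]$ plus uniform convergence over the finite bias class) would prove a genuinely different and weaker statement than the formal version in Appendix~\ref{th3-ap}, which is explicitly about a model \emph{trained by SGD} with prescribed $B$, $\eta$, $T$, and $|\mathcal{L}|$; a pure capacity bound does not establish that the algorithm finds the good parameters. On the negative side, your aliasing argument is the same as the paper's in substance (SPD is the coarsest coordinate of $\mathrm{D}_{i,j}$, so an SPD-keyed bias must weight the phantom twin $\{j:\mathrm{D}_{i,j}=D^{\dagger}\}$ identically to $S_*^i$), though note the paper only claims the \emph{absence} of a guarantee --- aggregation by SPD may pull in label-inconsistent confusion nodes --- whereas your claim of a constant population-error floor is stronger and would require actually exhibiting the adversarial feature configuration; also, the Dodecahedron/Desargues pair from Figure~\ref{fig:HDSE} witnesses an isomorphism-test separation (Proposition~\ref{th1}), not a node-classification one, so it is not directly the right witness here.
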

The formal version and the proof are given in Appendix~\ref{th3-ap}. Proposition \ref{th3} is a corollary and extension of Theorem 4.1 of \cite{li2023what} from SPD to HDSE. It indicates that learning with HDSE can capture the labeling function characterized by the hierarchical core neighborhood, which is more general and comprehensive than the core neighborhood for SPD proposed in \cite{li2023what}. 


\subsection{Scaling HDSE to Large-scale Graphs}\label{subsec:HGT_HDSE}

For large graphs with millions of nodes, training and deploying a transformer with full global attention is impractical due to the quadratic cost. Some existing graph transformers address this issue by sampling nodes for attention computation~\cite{zhang2022hierarchical, zhu2023hierarchical}. While our HDSE can enhance these models, this sampling approach compromises the expressivity needed to capture interactions among all pairs of nodes.
However, in the NLP domain, Linformer \cite{wang2020linformer} utilizes a learnable low-rank projection matrix $\mathbf{\tilde{P}}$ to reduce the complexity of the self-attention module to linear levels:
 \begin{equation}\mathrm{Attention}\left( \mathbf{X} \right)=
  {\mathrm{softmax} \left( {\mathbf{X}\mathrm{W}_\mathbf{Q}(\mathbf{\tilde{P}X}\mathrm{W}_\mathbf{K})^{\top}}/{\sqrt{d'}}\right) \mathbf{\tilde{P}X}\mathrm{W}_\mathbf{V}}.\label{eq:linformer}\end{equation}
Inspired by Linformer, models like GOAT \cite{pmlr-v202-kong23a} and Gapformer \cite{liu2023gapformer} in the graph domain also employ projection matrices to reduce the number of nodes by projecting them onto fewer super-nodes, consequently compressing the input node feature matrix $\mathbf{X}$ into a smaller dimension.
This transformation enables the aggregation of information from super-nodes and reduces the quadratic complexity of attention computation to linear complexity.
Here, we can replace the projection matrix with our coarsening partition matrix $P$ (see Sec.~\ref{sec:hie}) to obtain representations of coarser graphs at higher levels.
Observe that we can still calculate meaningful distances at these higher hierarchy levels, using a \emph{high-level} HDSE as follow:

\begin{equation}
\mathrm{D}^c_{i,j}= \left[ \mathrm{GHD}^c(\prod_{l=0}^{c-1} P^l),...,\mathrm{GHD}^{K}(\prod_{l=0}^{c-1} P^l) \right]_{i,j}{}_{1 \le c \le K}, 
\end{equation}
where each row of the projection matrix $P^l$ (see Sec.~\ref{sec:hie}) is a one-hot vector representing the $l$-level cluster that an input node belongs to, and $\mathrm{D}^c \in \mathbb{R} ^{|V^{0}|\times |V^{c}|\times (K+1-c)}$. Note that $\mathrm{GHD}^m(\prod_{l=0}^{c-1} P^l)$, $c \le m \le K$ computes distances from input nodes to clusters at $c$-level graph hierarchy. In practice, these distances can be directly obtained by calculating the hierarchy distance between all node pairs at the $c$-level. When \( c = 0 \), \( \mathrm{D}^c \) becomes \( \mathrm{D} \) in Eq \ref{eqn: HDSE D_ij}. In this way, the high-level HDSE establishes attention between nodes in the input graph $G$ and clusters at high level hierarchies. For example, we can integrate the high-level HDSE into Linformer by adapting Equation (\ref{eq:linformer}):
\begin{align}
 	\label{eq:transformer3}\mathrm{Attention}\left( \mathbf{X} \right)&=
  {\mathrm{softmax} \left( \frac{\mathbf{X}\mathrm{W}_\mathbf{Q}(\mathbf{X}^k\mathrm{W}_\mathbf{K})^{\top}}{\sqrt{d'}} + \mathrm{H}^k \right) \mathbf{X}^k\mathrm{W}_\mathbf{V}},  \nonumber \\
  \mathrm{H}^k&=\mathrm{Bias}(\mathrm{D}^k) \in \mathbb{R} ^{|V^{0}|\times |V^{k}|}, 
\end{align}
where 
 $\mathbf{X}^{k} \in \mathbb{R} ^{|V^{k}|\times d}$ (see Sec.~\ref{sec:hie}) represents the features of clusters at $k$-level, and $\mathrm{Bias}: \mathbb{R}^{K+1-k} \mapsto \mathbb{R}$ is a end-to-end trainable function as defined in Sec.~\ref{integrate}.

\vspace{-0.05 in}
\section{Evaluation}
We evaluate our proposed HDSE on 18 benchmark datasets, and show state-of-the-art performance in many cases. Primarily, the following questions are investigated:
\begin{itemize}[leftmargin=*,noitemsep,topsep=0pt] 
	\item Can \textbf{HDSE improve upon existing graph transformers}, and how does the choice of \textbf{coarsening algorithm} affect performance? (\textbf{Sec. \ref{graph-level}})
	\item Does our \textbf{adaptation for large graphs} also \textbf{show effectiveness}, is it marked by \textbf{efficiency}, and how does \textbf{high-level HDSE} impact the performance? (\textbf{Sec. \ref{node-level}})
\end{itemize}

\subsection{Experiment Setting}
\label{ssec:Datasets}
\textbf{Datasets.} We consider 
various graph learning tasks from popular benchmarks as detailed below and in Appendix~\ref{ap-b}. 
\begin{itemize}[leftmargin=*,noitemsep,topsep=0pt]
	\item \textbf{Graph-level Tasks}\textbf{.} 
    For graph classification and regression, we evaluate our method on five datasets from Benchmarking GNNs \cite{dwivedi2023benchmarking}: ZINC, MNIST, CIFAR10, PATTERN, and CLUSTER. We also test on two peptide graph benchmarks from the Long-Range Graph Benchmark \cite{dwivedi2022long}: Peptides-func and Peptides-struct, focusing on classifying graphs into 10 functional classes and regressing 11 structural properties, respectively. We follow all evaluation protocols suggested by \cite{rampavsek2022recipe}.
	\item\textbf{Node Classification on Large-scale Graphs.} 
 We consider node classification over the citation graphs Cora, CiteSeer and PubMed \cite{kipf2017semisupervised}, the Actor co-occurrence graph \cite{chien2020adaptive}, and the Squirrel and Chameleon page-page networks from \cite{rozemberczki2021multi}, all of which have 1K-20K nodes. Further, we extend our evaluation to larger datasets from the Open Graph Benchmark (OGB) \cite{hu2020open}: ogbn-arxiv, arxiv-year, ogbn-papers100M, ogbn-proteins and ogbn-products, with node numbers ranging from 0.16M to 0.1B. We maintain all the experimental settings as described in \cite{wu2023simplifying}. 
\end{itemize}

\textbf{Baselines.} 
We compare our method to the following prevalent GNNs: GCN \cite{kipf2017semisupervised}, GIN \cite{xu2018powerful}, GAT \cite{velivckovic2018graph}, GatedGCN \cite{bresson2017residual}, GatedGCN-RWSE \cite{dwivedi2021graph}, JKNet \cite{xu2018representation}, APPNP \cite{gasteiger2018predict}, SGC \cite{wu2019simplifying}, PNA \cite{corso2020principal}, GPRGNN \cite{chien2020adaptive}, SIGN \cite{rossi2020sign}, H2GCN \cite{zhu2020beyond}; and other recent GNNs with SOTA performance:
LINKX \cite{lim2021large},
CIN \cite{bodnar2021weisfeiler}, 
GIN-AK+ \cite{zhao2021stars}, HC-GNN \cite{zhong2023hierarchical}. In terms of transformer models, we consider \textbf{GT}\cite{dwivedi2020generalization}, Graphormer \cite{ying2021transformers}, SAN \cite{kreuzer2021rethinking}, Coarformer \cite{kuang2021coarformer}, ANS-GT \cite{zhang2022hierarchical}, EGT \cite{hussain2022global}, NodeFormer \cite{wu2022nodeformer}, Specformer \cite{bo2023specformer}, MGT \cite{ngo2023multiresolution}, AGT \cite{ma2023rethinking}, HSGT \cite{zhu2023hierarchical}, Graphormer-GD \cite{zhang2023rethinking}, \textbf{SAT} \cite{chen2022structure}, \textbf{GOAT} \cite{pmlr-v202-kong23a}, Gapformer \cite{liu2023gapformer}, Graph ViT/MLP-Mixer \cite{he2023generalization}, LargeGT \cite{dwivedi2023graph}, NAGphormer \cite{chen2022nagphormer}, Exphormer \cite{shirzad2023exphormer}, DRew \cite{gutteridge2023drew}, VCR-GT \cite{fu2024vcrgraphormer}, CoBFormer \cite{xing2024less} and recent SOTA graph transformers \textbf{GraphGPS} \cite{rampavsek2022recipe}, \textbf{GRIT} \cite{ma2023graph}, SGFormer \cite{wu2023simplifying}.




\textbf{Models + HDSE.} 
We integrate HDSE into GT, SAT, GraphGPS, GRIT (and ANS-GT in appendix) \emph{only modifying their self-attention module} by Eq. \ref{eq:transformer2}.
For fair comparisons, we use the same hyperparameters (including the number of layers, batch size, hidden dimension etc.), PE and readout as the baseline transformers.
Given one of the baseline transformers \textbf{M}, we denote the modified model using HDSE by \textbf{M + HDSE}. Additionally, we integrate our high-level HDSE into \textbf{GOAT}, denoted as \textbf{GOAT + HDSE}. 
We fix the maximum distance length $L=30$ and vary the maximum hierarchy level $K$ in $\{0,1,2\}$ in all experiments. A sensitivity analysis of these two parameters is presented in Appendix~\ref{ap-c}.
During training, the steps of coarsening and distance calculation \cite{dijkstra1959note} can be treated as pre-processing, since they only need to be run once. We detail the choice and runtime of coarsening algorithms for HDSE in the appendix. 
Detailed experimental setup and hyperparameters are in Appendix~\ref{ap-b} due to space constraints.


\begin{table*}[t]
\scriptsize
	\centering
        \vspace{-0.1 in}
         \caption{Test performance in five benchmarks from \cite{dwivedi2023benchmarking}. Shown is the mean $\pm$ s.d. of 5 runs with different random seeds. Baseline results were obtained from their respective original papers. $^*$ indicates a statistically significant difference against the baseline w/o HDSE from the one-tailed t-test. Highlighted are the top \textbf{\textcolor{darkgreen}{first}}, \textbf{\textcolor{orange}{second}} and \textbf{\textcolor{deepred}{third}} results.}
	\begin{tabular}{lccccc}
		\toprule
		{Model} & {ZINC} & {MNIST} & {CIFAR10} & {PATTERN} & {CLUSTER}\\
		& MAE $\downarrow$ & Accuracy $\uparrow$ & Accuracy $\uparrow$ & Accuracy $\uparrow$ & Accuracy $\uparrow$\\
		\midrule %
        GCN & 0.367 $\pm$ 0.011 & 90.705 $\pm$ 0.218 & 55.710 $\pm$ 0.381 & 71.892 $\pm$ 0.334 & 68.498 $\pm$ 0.976  \\
        GIN & 0.526 $\pm$ 0.051 & 96.485 $\pm$ 0.252 & 55.255 $\pm$ 1.527 & 85.387 $\pm$ 0.136 & 64.716 $\pm$ 1.553  \\ 
        GatedGCN & 0.282 $\pm$ 0.015 & 97.340 $\pm$ 0.143 & 67.312 $\pm$ 0.311 & 85.568 $\pm$ 0.088 & 73.840 $\pm$ 0.326  \\
        PNA & 0.188 $\pm$ 0.004 & 97.940 $\pm$ 0.120 & 70.350 $\pm$ 0.630 &  – &  –  \\
        CIN & 0.079 $\pm$ 0.006 &  – &  – &  – &  –  \\
        GIN-AK+ & 0.080 $\pm$ 0.001 &  –  & 72.190 $\pm$ 0.130 & 86.850 $\pm$ 0.057 &  –  \\
        \midrule %
        SGFormer & 0.306 $\pm$ 0.023 & – & – & 85.287 $\pm$ 0.097 & 69.972 $\pm$ 0.634   \\
        \midrule %
        SAN & 0.139 $\pm$ 0.006 &  – &  –  & 86.581 $\pm$ 0.037 & 76.691 $\pm$ 0.650  \\
        Graphormer-GD & 0.081 $\pm$ 0.009 & – & – & – & –  \\
        Specformer & \textbf{\textcolor{deepred}{0.066 $\pm$ 0.003}} & – & – & – & –  \\
        EGT & 0.108 $\pm$ 0.009 & 98.173 $\pm$ 0.087  & 68.702 $\pm$ 0.409 & 86.821 $\pm$ 0.020 & \textbf{\textcolor{deepred}{79.232 $\pm$ 0.348}}  \\
        Graph ViT/MLP-Mixer &  0.073 $\pm$ 0.001  &  97.422 $\pm$ 0.110 &  73.961 $\pm$ 0.330 &  – &  –  \\
        Exphormer &  -  &  \textbf{\textcolor{darkgreen}{98.550 $\pm$ 0.039}} &  74.696 $\pm$ 0.125 &  86.742 $\pm$ 0.015 &  78.071 $\pm$ 0.037 \\
        \midrule %
        GT & 0.226 $\pm$ 0.014 &  90.831 $\pm$ 0.161  &  59.753 $\pm$ 0.293 & 84.808 $\pm$ 0.068  &  73.169 $\pm$ 0.622 \\
        \textbf{GT + HDSE}  & 0.159 $\pm$ 0.006$^*$ &  94.394 $\pm$ 0.177$^*$ &  64.651 $\pm$ 0.591$^*$ & 86.713 $\pm$ 0.049$^*$ &  74.223 $\pm$ 0.573$^*$   \\
        \midrule %
        SAT & 0.094 $\pm$ 0.008 &  – &  – & 86.848 $\pm$ 0.037 & 77.856 $\pm$ 0.104  \\
        \textbf{SAT + HDSE}  & 0.084 $\pm$ 0.003$^*$ &  – &  – & \textbf{\textcolor{deepred}{86.933 $\pm$ 0.039$^*$}} & 78.513 $\pm$ 0.097$^*$ \\
        \midrule %
        GraphGPS & 0.070 $\pm$ 0.004 & 98.051 $\pm$ 0.126 & 72.298 $\pm$ 0.356 & 86.685 $\pm$ 0.059 & 78.016 $\pm$ 0.180 \\
        \textbf{GraphGPS + HDSE} & \textbf{\textcolor{orange}{0.062 $\pm$ 0.003$^*$}} & \textbf{\textcolor{deepred}{98.367 $\pm$ 0.106$^*$ }}& \textbf{\textcolor{deepred}{76.180 $\pm$ 0.277$^*$}} & 86.737 $\pm$ 0.055 & 78.498 $\pm$ 0.121$^*$ \\
        \midrule %
        GRIT & \textbf{\textcolor{darkgreen}{0.059 $\pm$ 0.002}} & 98.108 $\pm$ 0.111  & \textbf{\textcolor{orange}{76.468 $\pm$ 0.881}} & \textbf{\textcolor{orange}{87.196 $\pm$ 0.076}} & \textbf{\textcolor{darkgreen}{80.026    $\pm$ 0.277}}  \\
        \textbf{GRIT + HDSE} & \textbf{\textcolor{darkgreen}{0.059 $\pm$ 0.004}} & \textbf{\textcolor{orange}{98.424 $\pm$ 0.124$^*$}}  & \textbf{\textcolor{darkgreen}{76.473 $\pm$ 0.429}} & \textbf{\textcolor{darkgreen}{87.281 $\pm$ 0.064}} & \textbf{\textcolor{orange}{79.965    $\pm$ 0.203}}  \\
        \bottomrule
	\end{tabular}
    \vspace{-0.1 in}
	\label{tab:tab2}
\end{table*}

\vspace{-0.1 in}
\begin{table}[h]
 \begin{minipage}[t]{0.52\linewidth}
 \caption{Test performance on two peptide datasets from Long-Range Graph Benchmarks (LRGB) \cite{dwivedi2022long}.}
	\centering
        \scriptsize
	\begin{tabular}{lcc}
		\toprule
		Model & Peptides-func & Peptides-struct\\
		&AP $\uparrow$ & MAE $\downarrow$\\
		\midrule %
		GCN  & 0.5930 \scriptsize{$\pm$ 0.0023}  & 0.3496 \scriptsize{$\pm$ 0.0013} \\
            GINE  & 0.5498 \scriptsize{$\pm$ 0.0079}  & 0.3547 \scriptsize{$\pm$ 0.0045} \\
            GatedGCN  & 0.5864 \scriptsize{$\pm$ 0.0035}  & 0.3420 \scriptsize{$\pm$ 0.0013} \\
            GatedGCN+RWSE  & 0.6069 \scriptsize{$\pm$ 0.0035}  & 0.3357 \scriptsize{$\pm$ 0.0006} \\
            \midrule %
            GT  & 0.6326 \scriptsize{$\pm$ 0.0126}  & 0.2529 \scriptsize{$\pm$ 0.0016} \\
            SAN+RWSE  & 0.6439 \scriptsize{$\pm$ 0.0075} &  0.2545 \scriptsize{$\pm$ 0.0012} \\
            MGT+WavePE & 0.6817 \scriptsize{$\pm$ 0.0064}  &  \textbf{\textcolor{darkgreen}{0.2453 \scriptsize{$\pm$ 0.0025}}} \\
            GRIT  & \textbf{\textcolor{deepred}{0.6988 \scriptsize{$\pm$ 0.0082}}} &  \textbf{\textcolor{deepred}{0.2460 \scriptsize{$\pm$ 0.0012}}} \\
            Exphormer & 0.6527 \scriptsize{$\pm$ 0.0043}	& 0.2481 \scriptsize{$\pm$ 0.0007} \\
            Graph ViT/MLP-Mixer &  0.6970 \scriptsize{$\pm$ 0.0080} & 0.2475 \scriptsize{$\pm$ 0.0015} \\
            DRew & \textbf{\textcolor{orange}{0.7150 \scriptsize{$\pm$ 0.0044}}}	& 0.2536 \scriptsize{$\pm$ 0.0015} \\
            \midrule 
            GraphGPS  & 0.6535 \scriptsize{$\pm$ 0.0041} & 0.2500 \scriptsize{$\pm$ 0.0012} \\
            \textbf{GraphGPS + HDSE} & \textbf{\textcolor{darkgreen}{0.7156 \scriptsize{$\pm$ 0.0058}$^*$}} & \textbf{\textcolor{orange}{0.2457 \scriptsize{$\pm$ 0.0013}$^*$}} \\
		\bottomrule
	\end{tabular}
	\label{tab:tab3}
 \end{minipage}
    \hfill
    \begin{minipage}[t]{0.45\linewidth}
	\centering
 \scriptsize
 \caption{Ablations on design choices of coarsening algorithms on ZINC.}
	\begin{tabular}{lcc}
		\toprule
		Model & Coarsening algorithm & ZINC \\
              &   & MAE $\downarrow$ \\
            \midrule %
            \multirow{6}{*}{SAT}
            & w/o & 0.094 $\pm$ 0.008   \\
            \cmidrule(lr){2-3}
            & METIS  & 0.089 $\pm$ 0.005   \\
		& Spectral & 0.088 $\pm$ 0.004  \\
		& Loukas & \textbf{\textcolor{darkgreen}{0.084 $\pm$ 0.003}}   \\
            & Newman  & \textbf{\textcolor{orange}{0.087 $\pm$ 0.002}}   \\
		& Louvain  & 0.088 $\pm$ 0.003   \\
		\midrule %
            \multirow{6}{*}{GraphGPS}
            & w/o  & 0.070 $\pm$ 0.004   \\
            \cmidrule(lr){2-3}
            & METIS  & 0.069 $\pm$ 0.002   \\
		& Spectral  & \textbf{\textcolor{orange}{0.063 $\pm$ 0.003}}  \\
		& Loukas & 0.067 $\pm$ 0.002   \\
            & Newman  & \textbf{\textcolor{darkgreen}{0.062 $\pm$ 0.003}}   \\
		& Louvain  & 0.064 $\pm$ 0.002 \\
		\bottomrule
	\end{tabular}
	\label{tab:tab5}
 \end{minipage}
\end{table}
\vspace{-0.1 in}
\subsection{Results on Graph-level Tasks}\label{graph-level}

\textbf{Benchmarks from Benchmarking GNNs, Table \ref{tab:tab2}.} 
We observe that nearly all four baseline graph transformers, when combined with HDSE, demonstrate performance improvements.
Note that the enhancement is overall especially considerable for GT.
On CIFAR10, we also obtain similar improvement for GraphGPS. 
Among them, GT shows the greatest enhancement and becomes competitive to more complex models. Our model attains the best or second-best mean performance for all datasets. While the improvement for GRIT is smaller, as its relative random walk probabilities (RRWP) already incorporate distance information \cite{ma2023graph}, we still observe improvements in three datasets. This indicates that HDSE can provide additional information beyond what is captured by RRWP. Notably, it is observed that the SOTA SGFormer tailored for large-scale node classification underperforms in graph-level tasks.

\textbf{Long-Range Graph Benchmark, Table \ref{tab:tab3}.} We consider GraphGPS due to its superior performance.
Note that our HDSE module only introduces a small number of additional parameters, allowing it to remain within the benchmark's $\sim$500k model parameter budget. 
In the Peptides-func dataset, HDSE yields a significant improvement of 6.21\%. This is a promising result and hints at potentially great benefits for macromolecular data more generally.

\textbf{Impact of Coarsening Algorithms, Table \ref{tab:tab5}, Figure \ref{fig:attention}.} 
We conduct several ablation experiments on ZINC and observe that the dependency on the coarsening varies with the transformer backbone. For instance, the multi-level graph structures extracted by the Newman algorithm yields the largest improvement for GraphGPS. 
More generally, our experiments indicate that, Newman works best for molecular graphs. 
We visualize the attention scores on the ZINC and Peptides-func datasets respectively, as shown in Figure \ref{fig:attention}. The results indicate that our HDSE method successfully leverages hierarchical structure. 

\begin{table*}[t]
    \centering
    \vspace{-0.1 in}
    {\tiny
    \caption{\textbf{Node classification} on large-scale graphs (\%).
     The baseline results were primarily taken from \citep{wu2023simplifying}, with the remaining obtained from their respective original papers. OOM indicates out-of-memory when training on a GPU with 24GB memory.}
    \begin{tabular}{c|ccc|ccccc}
        \toprule
        Model    & Actor      & Squirrel   & Chameleon  & ogbn-proteins & ogbn-arxiv   & arxiv-year    & ogbn-products  &ogbn-100M      \\
         \midrule 
        \# nodes    & 7,600      & 2223       & 890        & 132,534       & 169,343      & 169,343      & 
        2,449,029 & 111,059,956    \\
        \# edges & 29,926 & 46,998 & 8,854 & 39,561,252 & 1,166,243 & 1,166,243 & 61,859,140 &1,615,685,872 \\
         & Accuracy$\uparrow$ &Accuracy$\uparrow$ &Accuracy$\uparrow$ &ROC-AUC$\uparrow$ &Accuracy$\uparrow$ &Accuracy$\uparrow$ &Accuracy$\uparrow$ &Accuracy$\uparrow$ \\
        \midrule %
        GCN        & 30.1 ± 0.2 & 38.6 ± 1.8 & 41.3 ± 3.0 & 72.51 ± 0.35  & 71.74 ± 0.29 & 46.02 ± 0.26 & 75.64 ± 0.21  & 62.04 ± 0.27\\
        GAT        & 29.8 ± 0.6 & 35.6 ± 2.1 & 39.2 ± 3.1 & 72.02 ± 0.44           & 71.95 ± 0.36  & 50.27 ± 0.20 & 79.45 ± 0.59             & 63.47 ± 0.39          \\
        SGC        & 27.0 ± 0.9 & 39.3 ± 2.3 & 39.0 ± 3.3 & 70.31 ± 0.23          & 67.79 ± 0.27 & -  & 76.31 ± 0.37 & 63.29 ± 0.19 \\
        SIGN &36.5 ± 1.0 &40.7 ± 2.5 &41.7 ± 2.2 &71.24 ± 0.46 &71.95 ± 0.11 &- &80.52 ± 0.16 & \textbf{\textcolor{deepred}{65.11 ± 0.14}}\\
        LINKX &- &- &- &- &66.18 ± 0.33 & 53.53 ± 0.36 &71.59 ± 0.71 & -\\
        HC-GNN & -          & -          & -          & -             & \textbf{\textcolor{deepred}{72.79 ± 0.25}} & 
        - & -             & - \\
        \midrule 
        Graphormer &OOM &40.9 ± 2.5 &41.9 ± 2.8 &OOM &OOM &OOM &OOM &OOM\\
        SAT & - & - & - & OOM & OOM & OOM & OOM &OOM\\
        \midrule 
        ANS-GT &- &- &- &74.67 ±  0.65 &72.34 ±  0.50 &- &80.64 ± 0.29 &-\\
        AGT &- &- &- &- &72.28 ± 0.38 &47.38 ± 0.78 &- &-\\
        HSGT &- &- &- &78.13 ±  0.25 &72.58 ±  0.31 &- &\textbf{\textcolor{deepred}{81.15 ± 0.13}} &- \\
        \midrule 
        GraphGPS &33.1 ± 0.8 & - &36.2 ± 0.6 & - & 70.97 ± 0.41 & - & OOM &OOM\\
        Gapformer    & -          & -          & -                     & - & 71.90 ± 0.19 & 
        - & -            & -           \\
        LargeGT & -          & -          & -          & -             & - & 
        - & -            & 64.73 ± 0.05          \\
        VCR-GT & -          & -          & -          & -             & - & 
        \textbf{\textcolor{orange}{54.15 ± 0.09}} & -            & -          \\
        NAGphormer& -          & -   & -   & -     & 70.13 ± 0.55          & -              &  73.55 ± 0.21            & -          \\
        Exphormer& -          & -          & -          & -             & 72.44 ± 0.28 & 
        - & OOM            & OOM          \\
        NodeFormer & {36.9 ± 1.0} & 38.5 ± 1.5 & 34.7 ± 4.1 & 77.45 ± 1.15  & 59.90 ± 0.42 & -            &  72.93 ± 0.13 &-\\
        CoBFormer & \textbf{\textcolor{deepred}{37.4 ± 1.0}} & - & - & -  & \textbf{\textcolor{orange}{73.17 ± 0.18}} & -            & 78.15 ± 0.07 &-\\
        SGFormer   & \textbf{\textcolor{orange}{37.9 ± 1.1}} & \textbf{\textcolor{orange}{41.8 ± 2.2}} & \textbf{\textcolor{orange}{44.9 ± 3.9}} & \textbf{\textcolor{orange}{79.53 ± 0.38}}  & 72.63 ± 0.13 & -            & 75.36 ± 0.33 &\textbf{\textcolor{orange}{66.01 ± 0.37}} \\
        \midrule 
        GOAT  & 32.1 ± 1.8          & \textbf{\textcolor{deepred}{41.1 ± 2.5}}         & \textbf{\textcolor{deepred}{43.3 ± 4.3}}          & \textbf{\textcolor{deepred}{78.37 ± 0.26}}             & 72.41 ± 0.40 & 
        \textbf{\textcolor{deepred}{53.57 ± 0.18}}            & \textbf{\textcolor{orange}{82.00 ± 0.43}}  & 65.05 ± 0.13          \\
        \textbf{GOAT + HDSE}     & \textbf{\textcolor{darkgreen}{38.0 ± 1.5$^*$}} & \textbf{\textcolor{darkgreen}{43.2 ± 2.4}} & \textbf{\textcolor{darkgreen}{46.0 ± 3.2}} & \textbf{\textcolor{darkgreen}{80.34 ± 0.32$^*$}}  & \textbf{\textcolor{darkgreen}{73.26 ± 0.19$^*$}} & \textbf{\textcolor{darkgreen}{54.23 ± 0.26$^*$}} & \textbf{\textcolor{darkgreen}{83.38 ± 0.17$^*$}} & \textbf{\textcolor{darkgreen}{66.04 ± 0.15$^*$}} \\
        \bottomrule
    \end{tabular}
    \label{tab:tab4}}
     \vspace{-0.2 in}

\end{table*}

\begin{table}
\begin{minipage}[t]{0.46\linewidth}
\scriptsize
\caption{Efficiency comparison of GOAT + HDSE and scalable graph transformer competitors; training time per epoch.}
	\centering
	\begin{tabular}{lccc}
		\toprule
		  &PubMed & ogbn-proteins & ogbn-arxiv \\
            \midrule %
            NodeFormer &321.4ms & 1.8s  & 0.6s \\
            SGFormer &  15.4ms & 0.8s  & 0.2s  \\
           GOAT+HDSE & 13.2ms & 0.6s  & 0.2s  \\
		\bottomrule
	\end{tabular}
     \vspace{-0.1 in}
	\label{tab:tab-ex1}
\end{minipage}
\hfill
\begin{minipage}[t]{0.51\linewidth}
\scriptsize
\caption{Ablation study of GOAT + HDSE. "W/o Coarsening" refers to replacing the projection matrix with the original projection matrix used in GOAT.}
	\centering
	\begin{tabular}{lccc}
		\toprule
		  & Actor $\uparrow$ & ogbn-proteins $\uparrow$ & arxiv-year $\uparrow$\\
            \midrule %
             GOAT+HDSE  & 38.0 $\pm$ 1.5  & 80.3 $\pm$ 0.3 &  54.2 $\pm$ 0.2 \\
            w/o HDSE & 34.6 $\pm$ 2.2  & 79.4 $\pm$ 0.3 &  53.6 $\pm$ 0.3 \\
            w/o coarsening & 32.1 $\pm$ 1.8  & 78.3 $\pm$ 0.4 &  53.5 $\pm$ 0.2 \\
		\bottomrule
	\end{tabular}
 \vspace{-0.1 in}
	\label{tab:tab-ex2}
 \end{minipage}

\end{table}

\subsection{Results on Large-scale Graphs}\label{node-level}
\textbf{Overall Performance, Table \ref{tab:tab4}, \ref{tab:tab-cora}.} We select four categories of baselines: GNNs, graph transformers with proven performance on graph-level tasks, graph transformers with hierarchy, and scalable graph transformers. 
It is noteworthy that while some graph transformers exhibit superior performance on graph-level tasks, they consistently result in out-of-memory (OOM) in large-scale node tasks. The results are remarkably consistent; Table \ref{tab:tab4} shows our model outperforming all others. In heterophilic graphs (on the left side), the integration of high-level HDSE enables GOAT to demonstrate competitive performance among baseline models. This could be due to the coarsening projection filtering out the edges from neighboring nodes of different categories and providing a global perspective enriched with multi-level structural information. 
For all larger graphs (on the right side), our high-level HDSE method significantly enhances GOAT's performance beyond its original version. This indicates that the structural bias provided by graph hierarchies is capable of handling the node classification task in such larger graphs and effectively retains global information. We investigated this in more detail in our ablation experiments. Furthermore, we also observed that all graph transformers with hierarchy suffer from serious overfitting, attributed to their relatively complex architectures. In contrast, our model's simple architecture contributes to its better generalization. 

\textbf{Efficiency Comparison, Table \ref{tab:tab-ex1}.} We report the efficiency results on PubMed, ogbn-proteins and ogbn-arxiv. It is easy to see that our model outperforms other models in speed, matching the pace of the latest and fastest model, SGFormer \cite{wu2023simplifying}. It achieves true linear complexity with a streamlined architecture.

\textbf{Ablation Study, Table \ref{tab:tab-ex2}.} To determine the utility of our architectural design choices, we conduct ablation experiments on GOAT + HDSE over three datasets. The results presented in Table \ref{tab:tab-ex2}, include (1) removing the high-level HDSE and (2) replacing the coarsening projection matrix with the original projection matrix used in GOAT. These experiments reveal a decline in all performance, thereby validating the effectiveness of our architectural design.

\vspace{-0.1 in}
\section{Conclusions}
We have introduced the Hierarchical Distance Structural Encoding (HDSE) method to enhance the capabilities of transformer architectures in graph learning tasks. We have developed a flexible framework to integrate HDSE with various graph transformers. Further, for applying graph transformers with HDSE to large-scale graphs, we have introduced a high-level HDSE approach that effectively biases linear transformers towards the multi-level structure. Theoretical analysis and empirical results validate the effectiveness and generalization capabilities of HDSE, demonstrating its potential for various real-world applications.


\bibliography{neurips_2024}

\appendix
\onecolumn

\section{Proof}\label{ap-a}
\begin{proposition} \label{th1-app} (Restatement of Proposition \ref{th1})
    GD-WL with HDSE $(\mathrm{D}_{i,j}) $ is strictly more expressive than GD-WL with shortest path distances $(\mathrm{SPD}(i, j))$.
\end{proposition}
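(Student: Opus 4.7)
The statement has two directions: (i) \emph{containment}, that HDSE-based GD-WL refines every coloring produced by SPD-based GD-WL, and (ii) \emph{strictness}, that there exists at least one pair of non-isomorphic graphs separated by HDSE-GD-WL but not by SPD-GD-WL. I would handle them in that order.

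For the containment direction, the key observation is that $\mathrm{SPD}(u,v)$ is literally the zeroth coordinate of $\mathrm{D}_{u,v}$, since $\mathrm{GHD}^0(u,v)=\mathrm{SPD}(u,v)$ by Definition~\ref{def1}. Hence, at every GD-WL iteration, the multiset of pairs $\{(c_t(v), \mathrm{D}_{u,v}) : v\in V\}$ used by the HDSE variant can be projected onto $\{(c_t(v), \mathrm{GHD}^0(u,v)) : v\in V\}$, which is exactly what the SPD variant consumes. A standard refinement argument then shows by induction on the iteration index $t$ that the HDSE coloring refines the SPD coloring: if HDSE-GD-WL assigns the same color to $u$ and $u'$, then so does SPD-GD-WL, and consequently any pair of graphs distinguished by SPD-GD-WL is distinguished by HDSE-GD-WL.

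For strictness, I would exhibit an explicit separating example, namely the Dodecahedron $G_1$ and Desargues $G_2$ graphs mentioned in Figure~\ref{fig:HDSE}. Zhang et al.~\cite{zhang2023rethinking} already proved that SPD-GD-WL cannot distinguish this pair, so I can simply cite that fact and focus on producing a hierarchy under which HDSE-GD-WL separates them. I would fix a specific coarsening (e.g., the Newman partition used in Figure~\ref{fig:HDSE}), write out the induced $\phi_0:V\to V^1$ for each graph, and compute the multiset $\{\mathrm{GHD}^1(u,v)\}_{u,v}$ explicitly. The figure asserts that the Dodecahedron produces an entry of value $2$ at level~$1$ while Desargues does not; once that is verified, the two multisets of HDSE vectors $\{\mathrm{D}_{u,v}\}$ differ, and HDSE-GD-WL separates the graphs after a single refinement step.

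The main obstacle is the second direction: strictness is an existence claim over hierarchies, so I must commit to a concrete choice and carry out a finite but nontrivial case analysis on 20-node graphs to confirm the claim in the figure. The containment direction is essentially bookkeeping, and the choice of coarsening algorithm in the strict direction is not a real constraint, since HDSE is parametrized by any valid hierarchy $(G^k,\phi_k)_{k\geq 0}$ and we are free to pick one that witnesses the separation. If for some reason the particular Newman partition sketched in Figure~\ref{fig:HDSE} fails to separate the graphs on closer inspection, I would fall back on exhibiting any hierarchy in which the two graphs' partition-induced quotient graphs have different diameters or distance multisets; because $G_1$ and $G_2$ are non-isomorphic, such a hierarchy is guaranteed to exist (in the worst case, the trivial hierarchy that collapses a single distinguishable orbit).
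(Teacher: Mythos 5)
Your proposal is correct and follows essentially the same route as the paper: the containment direction uses the fact that $\mathrm{SPD}=\mathrm{GHD}^0$ is a coordinate of $\mathrm{D}_{i,j}$, so the HDSE coloring refines the SPD coloring (the paper invokes Lemma~2 of \cite{bevilacqua2021equivariant} for the multiset refinement step you describe as induction), and the strictness direction uses exactly the Dodecahedron/Desargues pair under the Girvan--Newman coarsening, citing \cite{zhang2023rethinking} for the failure of SPD. The only caveat is that your fallback argument (that non-isomorphy alone guarantees some separating hierarchy exists) is not justified and should not be relied upon, but it is unnecessary since the Newman partition does witness the separation.
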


\begin{proof}  
First, we show that GD-WL with HDSE is at least as expressive as GD-WL with shortest path distances (SPD). Then, we provide a specific example of two graphs that cannot be distinguished by GD-WL with SPD, but can be distinguished by GD-WL with HDSE.

Let $\mathrm{SPD}(i, j) \in \mathbb{R}$ denote the encoding for shortest path distance. It is worth mentioning that
$$\mathrm{D}_{i,j,0}=\mathrm{GHD}^0\left( i,j \right) = \mathrm{SPD}\left( i,j \right). $$

Thus, $\mathrm{D}_{i,j}$ is a function of $\mathrm{SPD}\left( i,j \right)$, and hence $\mathrm{D}_{i,j}$ refines $\mathrm{SPD}\left( i,j \right)$. To conclude this, we utilize Lemma 2 from \cite{bevilacqua2021equivariant}, which states that refinement is maintained when using multisets of colors. This observation confirms that GD-WL with HDSE is at least as powerful as GD-WL with SPD.

To show that GD-WL with HDSE is strictly more expressive, we provide an example of two non-isomorphic graphs that can be distinguished by the HDSE but not the SPD: the Desargues graph and the Dodecahedral graph. As depicted in Figure 6 of \cite{zhang2023rethinking}, it has been observed that GD-WL with SPD fails to distinguish these graphs. However, GD-WL with our HDSE can. Figure \ref{fig:HDSE2} shows the coarsening results of the Girvan-Newman Algorithm \cite{girvan2002community}. We can demonstrate that, for the Dodecahedral graph, each node has $1$-level hierarchy distances of length 2 to other nodes. On the other hand, in the Desargues graph, there are no such distances of length 2 between any pair of nodes.
\end{proof}

\begin{figure}[htb]   \center{\includegraphics[width=8cm]  {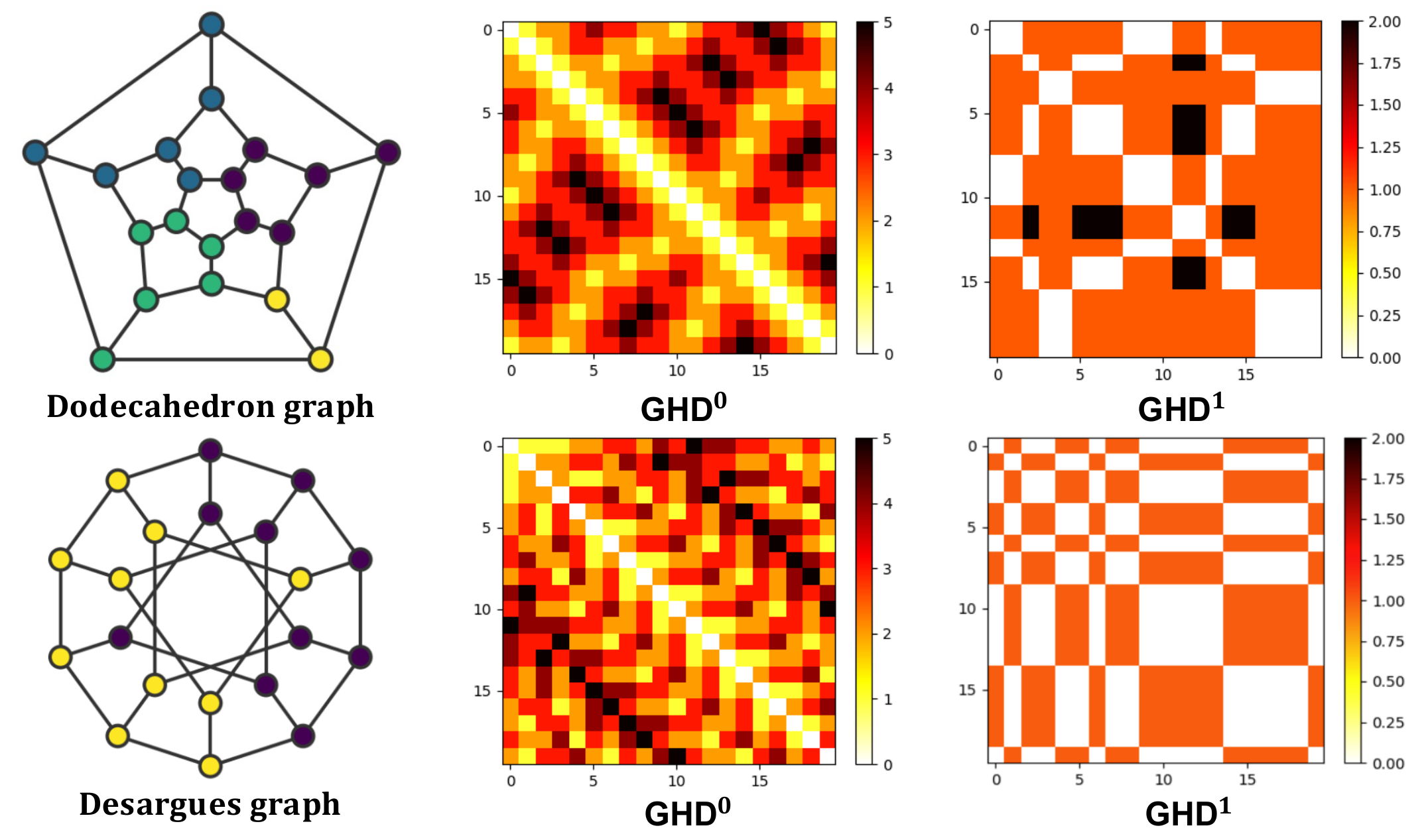}}   \caption{GD-WL with HDSE can distinguish Dodecahedron and Desargues graphs, but GD-WL with SPD cannot.}   \label{fig:HDSE2} \end{figure}

\begin{proposition}\label{th2-1-app} (Restatement of Corollary \ref{th2-1}) The power of a graph transformer with HDSE to distinguish non-isomorphic graphs is at most equivalent to that of the GD-WL test with HDSE. With proper parameters and an adequate number of heads and layers, a graph transformer with HDSE can match the power of the GD-WL test with HDSE.
\end{proposition}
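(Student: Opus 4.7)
The plan is to adapt the argument of Theorem 3.2 of \cite{zhang2023rethinking} (which relates SPD-based graph transformers to GD-WL with SPD) by replacing SPD with HDSE throughout. The statement has two directions: an upper bound asserting that the transformer is no more powerful than GD-WL with HDSE, and a lower bound asserting that the transformer can realize GD-WL with HDSE given enough capacity. I would prove these separately.

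For the upper bound, I would proceed by induction on the layer index $\ell$. The inductive hypothesis is: for any two nodes $u,v$ (possibly in different graphs) that share the same GD-WL color with HDSE at iteration $\ell$, their hidden representations $h_u^{(\ell)}, h_v^{(\ell)}$ produced by the transformer agree. The base case is immediate because both sides use only the initial node features. For the inductive step, observe that the biased self-attention in Eq.~(\ref{eq:transformer2}) aggregates values $\mathbf{V}$ weighted by $\mathrm{softmax}(\mathrm{A}+\mathrm{H})$, where $\mathrm{H}_{u,w}$ depends only on $\mathrm{D}_{u,w}$. If two nodes $u,v$ have equal GD-WL-HDSE colors, then the multisets $\{\!\!\{(h_w^{(\ell)},\mathrm{D}_{u,w})\}\!\!\}_{w\in V}$ and $\{\!\!\{(h_w^{(\ell)},\mathrm{D}_{v,w})\}\!\!\}_{w\in V}$ coincide, so the biased softmax aggregates identical multisets and yields equal outputs; a residual/MLP block preserves this equality. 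Hence no transformer with HDSE can separate graphs that GD-WL with HDSE identifies.

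For the lower bound, I would construct a transformer that simulates one GD-WL-HDSE step per layer. The strategy is to assign one attention head per admissible value $d \in \{0,1,\dots,L\}^{K+1}$ of $\mathrm{D}_{u,\cdot}$ (after clipping), and set the bias $\mathrm{H}^{(d)}_{u,w}$ to a large positive constant if $\mathrm{D}_{u,w}=d$ and to a large negative constant otherwise; in the softmax limit this head averages the values over exactly the distance-$d$ neighborhood of $u$. A concatenation across heads therefore provides, up to a known invertible encoding, the multiset $\{\!\!\{(h_w^{(\ell)},\mathrm{D}_{u,w})\}\!\!\}_w$ that GD-WL needs. A sufficiently wide MLP after the attention can then realize the injective hash used in the GD-WL update (appealing to the universal-approximation argument for multisets over finite alphabets, as in \cite{xu2018powerful} and as used in \cite{zhang2023rethinking}). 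Iterating this construction for as many layers as GD-WL iterations matches the expressive power.

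The main obstacle will be the lower-bound simulation: carefully specifying the number of heads (exponential in $K$ and $L$ in the worst case but finite because of the clipping) and justifying that softmax attention, which is continuous, can realize the discrete ``distance-indicator'' aggregation to arbitrary accuracy while the subsequent MLP remains injective on the resulting multiset of representations. I would handle this by taking the standard limiting argument on the inverse temperature (scaling $\mathrm{H}$ by a large constant) and invoking the injective-multiset-function lemma to close the argument. All remaining steps are bookkeeping parallel to the SPD case in \cite{zhang2023rethinking}, with $\mathrm{SPD}$ replaced by $\mathrm{D}_{i,j}$.
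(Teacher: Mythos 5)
Your proposal is correct and follows essentially the same route as the paper: the upper bound is the same induction showing that equal GD-WL-with-HDSE colors force equal transformer features (because the biased softmax output is a function of the multiset $\left\{ (\mathrm{D}_{u,w}, h_w) : w \in V \right\}$), and the lower bound is the head-per-distance-value simulation that the paper simply outsources to Theorem E.3 of \cite{zhang2023rethinking}. The only difference is that you sketch that construction explicitly (including the clipping-based finiteness of the head count and the injective-multiset MLP) where the paper cites it wholesale.
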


\begin{proof}  
The theorem is divided into two parts: the first and second halves. We begin by considering the first half: The power of a graph transformer with HDSE to distinguish non-isomorphic graphs is at most equivalent to that of the GD-WL test with HDSE.

Recall that the GD-WL with HDSE is quite straightforward and can be expressed as:
$$
\chi^t_G(v) := \text{hash} \left\{ {(\mathrm{D}_{v, u}, \chi^{t-1}_G (u)) : u \in V} \right\}
$$
where $\chi^t_G(v)$ represents a color mapping function.

Suppose after \( t \) iterations, a graph transformer with HDSE \(\mathcal{M}\) has \(\mathcal{M}(G_1) \neq \mathcal{M}(G_2)\), yet GD-WL with HDSE fails to distinguish \( G_1 \) and \( G_2 \) as non-isomorphic. It implies that from iteration 0 to \( t \) in the GD-WL test, \( G_1 \) and \( G_2 \) always have the same collection of node labels. Particularly, since \( G_1 \) and \( G_2 \) have the same GD-WL node labels for iterations \( i + 1 \) for any \( i = 0, \ldots, t - 1 \), they also share the same collection of GD-WL node labels \( \left\{ (\mathrm{D}_{v, u}, \chi^{i}_G (u)) : u \in V \right\}\). Otherwise, the GD-WL test would have produced different node labels at iteration \( i + 1 \) for \( G_1 \) and \( G_2 \).

We show that within the same graph, for example \( G = G_1 \), if GD-WL node labels \(\chi^i_G(v) = \chi^i_G(w)\), then the graph transformer node features \(h^i_v = h^i_w \) for any iteration \( i \). This is clearly true for \( i = 0 \) because GD-WL and graph transformer start with identical node features. Assuming this holds true for iteration \( j \), if for any \( v, w \), \( \chi^{j+1}_G(v) = \chi^{j+1}_G(w) \), then we must have 
$$\left\{ (\mathrm{D}_{v, u}, \chi^{j}_G (u)) : u \in V \right\} = \left\{ (\mathrm{D}_{w, u}, \chi^{j}_G (u)) : u \in V \right\}.$$
By our assumption at iteration \( j \), we deduce that
$$h^{j+1}_v ={\sum_{u \in V} \mathrm{softmax}(\mathrm{Bias}(\mathrm{D}_{v, u})+ h^j_v\mathbf{W}_{\mathbf{Q}}(h^j_u\mathbf{W}_{\mathbf{K}})^{\top})h_u^j\mathbf{W}_{\mathbf{V}}} =\phi(\left\{ {(\mathrm{D}_{v, u}, \chi^{j}_G (u)) : u \in V} \right\}).$$
Hence,
$$h^{j+1}_v = \phi(\left\{ {(\mathrm{D}_{v, u}, \chi^{j}_G (u)) : u \in V} \right\}) = \phi(\left\{ {(\mathrm{D}_{w, u}, \chi^{j}_G (u)) : u \in V} \right\}) = h^{j+1}_w.$$
By induction, if GD-WL node labels \(\chi^i_G(v) = \chi^i_G(w)\), we always have the graph transformer node features \(h^i_v = h^i_w \) for any iteration \( i \). Consequently, from \( G_1 \) and \( G_2 \) having identical GD-WL node labels, it follows that they also have the same graph transformer node features.

Therefore, \(h^{i+1}_v = h^{i+1}_w\). Given that the graph-level readout function is permutation-invariant with respect to the collection of node features, \(\mathcal{M}(G_1) = \mathcal{M}(G_2)\). This leads to a contradiction.

This completes the proof of the first half of the theorem. For the theorem's second half, we can entirely leverage the proof of Theorem E.3 by \cite{zhang2023rethinking} (provided in Appendix E.3), which presents a similar situation.

\end{proof}

\begin{proposition}\label{th3-ap} (Full version of Proposition \ref{th3})
    For a semi-supervised binary node classification problem, suppose the label of each node $i\in V$ in the whole graph is determined by the majority vote of discriminative node features in the ``hierarchical core neighborhood'': $S_*^i=\{j: \mathrm{D}_{i,j}=D^*\}$ for a certain $D^*$, where $\mathrm{D}_{i,j}$ is HDSE defined in (\ref{eqn: HDSE D_ij}). Assume noiseless node features. Then, as long as the model is large enough, the batch size $B\geq \Omega(\epsilon^{-2})$, the step size $\eta<1$, 
 the number of iterations $T$ satisfies $T=\Theta(\eta^{-1/2})$ and the number of known labels satisfies $
    |\mathcal{L}|\geq \max\{\Omega((1+\delta_{D^*}^2)\cdot\log N), BT\}$, 
where $\delta_{D^*}$ measures the maximum number of nodes in the hierarchical core neighborhood $S_*^n$ for all nodes $n$, then with a probability of at least $0.99$, the returned one-layer graph transformer with HDSE trained by the SGD variant Algorithm 1 and Hinge loss in \citep{li2023what} can achieve a generalization error which is at most $\epsilon$ for any $\epsilon>0$. However, we do not have a generalization guarantee to learn such a graph characterized by the hierarchical core neighborhood with a one-layer graph transformer with SPD positional encoding. 
\end{proposition}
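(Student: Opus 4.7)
The plan is to lift Theorem 4.1 of \cite{li2023what} from the SPD setting to the HDSE setting, exploiting the fact that $\mathrm{D}_{i,j}$ contains $\mathrm{SPD}(i,j)$ as its zeroth coordinate while carrying strictly more structural information. The argument decomposes into three layers: (i) a feature-learning / attention-concentration argument showing that SGD on a one-layer graph transformer with HDSE learns to concentrate attention on the hierarchical core neighborhood $S_*^i$; (ii) a generalization bound inherited from \cite{li2023what} once the attention concentrates correctly; and (iii) a separation argument showing SPD is provably insufficient.

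For the positive direction, I would first note that the biased attention in Eq. (\ref{eq:transformer2}) with $\mathrm{H}_{i,j}=\mathrm{Bias}(\mathrm{D}_{i,j})$ realized by the MLP from Sec. \ref{integrate} can approximate, at an interior point of its parameter space, a soft indicator of the set $\{j:\mathrm{D}_{i,j}=D^*\}$. This is the crucial entry point of HDSE: since the MLP takes the full hierarchy vector, it can separate nodes that share SPD with $i$ but lie in different $\mathrm{GHD}^k$-classes. I would then adapt the gradient-trajectory analysis of \cite{li2023what}: their key lemma establishes that, with noiseless discriminative features and a properly initialized single-layer transformer, the attention weight $a_{i,j}$ grows for $j\in S_*^i$ and decays for $j\notin S_*^i$ as SGD proceeds; this only requires the positional encoding to be constant on $S_*^i$ and distinct off it, a property $\mathrm{D}_{i,j}$ satisfies by assumption. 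With batch size $B\geq\Omega(\epsilon^{-2})$, step size $\eta<1$, and $T=\Theta(\eta^{-1/2})$ iterations, the training Hinge loss drops below $\epsilon/2$. The generalization gap is then controlled by the same sub-Gaussian concentration used in \cite{li2023what}: the $(1+\delta_{D^*}^2)\log N$ sample complexity arises from a union bound over nodes whose effective dimension scales with the maximal hierarchical-core neighborhood size $\delta_{D^*}$, while $|\mathcal{L}|\geq BT$ guarantees independent minibatches.

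For the impossibility direction with SPD, I would exhibit two nodes $i,i'$ (or two small subgraph configurations) with identical SPD-profiles but distinct HDSE-profiles, along the lines of the Dodecahedron / Desargues separation in Proposition \ref{th1}, and arrange their labels to depend on the HDSE-distinguishable feature. Any SPD-only positional encoding must assign the same attention distribution to $i$ and $i'$, so the resulting classifier cannot do better than random on at least one of them, yielding a generalization error bounded away from zero regardless of sample size. The main obstacle I anticipate is step (i): verifying that the MLP-based $\mathrm{Bias}$ function's gradient dynamics actually converges to a near-hard selector on $S_*^i$ rather than to some partial projection that only resolves SPD; this likely requires either a careful tracking of the MLP parameter trajectory under the assumption of sufficient width, or a realizability lemma showing that a near-optimum of the inner MLP coincides with the desired selector and that SGD reaches its basin under the stated initialization.
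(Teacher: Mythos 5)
Your high-level route (reduce to Theorem 4.1 of \cite{li2023what}) is the same as the paper's, but there is a genuine gap in your step (i), and it is exactly the obstacle you flag at the end. You propose to keep the MLP-based $\mathrm{Bias}$ function of Sec.~\ref{integrate} acting on the raw vector $\mathrm{D}_{i,j}\in\mathbb{R}^{K+1}$ and then to argue that its SGD dynamics converge to a near-hard selector of $S_*^i$. The trajectory analysis of \cite{li2023what} does not cover a nonlinear, trainable encoder of the distance; it is carried out for a positional term that is linear in a \emph{one-hot} indicator of the distance class. The paper closes this gap not by analyzing the MLP but by reformulating the architecture: it encodes each distinct HDSE value $\mathrm{D}_{n,u}$ as a one-hot vector $\mathrm{B}(\mathrm{D}_{n,u})=\boldsymbol{c}_s\in\mathbb{R}^Z$ (a bijection from the finitely many HDSE values to standard basis vectors) and uses the positional term $\mathrm{B}(\mathrm{D}_{n,u})^\top b$ with a trainable $b\in\mathbb{R}^Z$. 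Under this parameterization the model is \emph{literally} an instance of the architecture analyzed in \cite{li2023what}, just with a refinement of the distance partition (SPD classes split into HDSE classes), and the hierarchical core neighborhood $S_*^i$ is a single class of that partition; hence Theorem 4.1 and its lemmas apply verbatim, with $\delta_{D^*}$ playing the role of the core-neighborhood size. Without this reformulation (or a new width/realizability lemma for the MLP dynamics, which you do not supply), your positive direction is not a routine verification but an open analysis problem.

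On the negative direction, note that the statement only asserts the \emph{absence} of a generalization guarantee for SPD, and the paper's argument is correspondingly modest: SPD classes are coarser than HDSE classes, so aggregating by SPD necessarily mixes in nodes outside $S_*^i$ whose features may be label-inconsistent, which breaks the hypotheses of the \cite{li2023what} guarantee. Your proposed stronger impossibility argument (two nodes with identical SPD profiles forced to receive identical attention, hence error bounded away from zero) overreaches: the attention also depends on the node features through $\mathbf{Q}\mathbf{K}^\top$, so identical SPD profiles do not force identical attention distributions, and the conclusion ``cannot do better than random on at least one of them'' does not follow as stated. The weaker, hypothesis-breaking argument suffices for the claim actually being made.
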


Before starting the proof, we first briefly introduce and extend some notions and setups used in \citep{li2023what}. \textbf{The major differences are that (1) we extend their core neighborhood from based on SPD to HDSE (2) we use HDSE in the transformer by encoding it as a one-hot encoding for simplicity of analysis.}

Their work focuses on a semi-supervised binary node classification problem on structured graph data, where each node feature corresponds to either a discriminative or a non-discriminative feature, and the dominant discriminative feature in the core neighborhood determines each ground truth node label. For each node, the neighboring nodes with features consistent with the label are called class-relevant nodes, while nodes with features opposite to the label are called confusion nodes. Denote the class-relevant and confusion nodes set for node $n$ as $\mathcal{D}_*^n$ and $\mathcal{D}_\#^n$, respectively. A new definition here is the distance-$D$ neighborhood $\mathcal{N}_{D}^n$, which is the set of nodes $\{j: \mathrm{D}_{n,j}=D, j\in V\}$. $\mathrm{D}$ is the HDSE defined in (\ref{eqn: HDSE D_ij}). Then, by following Definition 1 in \citep{li2023what}, we define the winning margin for each node $n$ of distance $D$ as $\Delta_n(D)=|\mathcal{D}_*^n\cap\mathcal{N}_D^n|-|\mathcal{D}_\#^n\cap\mathcal{N}_D^n|$. The core distance $D^*$ is the distance $D$ where the average winning margin over all nodes is the largest. We call the set of neighboring nodes $S_*^n=\{j: \mathrm{D}_{n,j}=D^*\}$ the core neighborhood. We then make the assumption that $\Delta_n(D^*)>0$ for all nodes $n\in V$, following Assumption 1 in \citep{li2023what}. The one-layer transformer we study is formulated as
\begin{equation}
F(h_n)=\text{Relu}(\sum_{u \in V} \mathrm{softmax}(\mathrm{B}(\mathrm{D}_{n, u})^\top b+ h_n\mathbf{W}_{\mathbf{Q}}(h_u\mathbf{W}_{\mathbf{K}})^{\top})h_u\mathbf{W}_{\mathbf{V}} \mathbf{W}_{\mathbf{O}} )\boldsymbol{a}\label{arch one-layer}
\end{equation}
where $\boldsymbol{W}_O\in\mathbb{R}^{d'\times d''}$ and $\boldsymbol{a}\in\mathbb{R}^{d''}$ are the hidden and output weights in the two-layer feedforward network, and $b\in\mathbb{R}^Z$ is the trainable parameter to learn the positional encoding. The one-hot relative positional encoding $\mathrm{B}(\mathrm{D}_{n, u})$ is defined as
\begin{equation}
    \mathrm{B}(\mathrm{D}_{n, u})=\boldsymbol{c}_s,\label{B one hot}
\end{equation}
where $\boldsymbol{c}_s$ is the $s$-th standard basis in $\mathbb{R}^{Z}$. $Z$ is the total number of all possible values of $\mathrm{D}_{n,u}$ for $n,u\in V$. 
$\mathrm{B}(\cdot)$ is a bijection from $\{d\in\mathbb{R}^{K+1}: d=\mathrm{D}_{n,u}\text{, for certain }n,v\in V\}$ to $\{\boldsymbol{c}_1,\boldsymbol{c}_2,\cdots,\boldsymbol{c}_{Z}\}$.

Then, we provide the proof for Proposition \ref{th3-ap}.

\begin{proof}
The proof follows Theorem 4.1 in \citep{li2023what} given the above reformulation. Note that (\ref{B one hot}) can also map the SPD relationship, which is a special one-dimensional case of $\mathrm{D}_{v,u}$, between nodes as (2) in \citep{li2023what} by the definition of itself. It means that (\ref{arch one-layer}) with HDSE can achieve a generalization performance on the graph characterized by the core neighborhood as good as in \citep{li2023what}. 

However, we cannot have an inverse conclusion, i.e., providing a generalization guarantee on the graph characterized by the hierarchical core neighborhood using (\ref{arch one-layer}) with SPD. This is because SPD cannot distinguish nodes with the same SPD but different HDSE to a certain node. Hence, for a certain node $i\in V$, aggregating nodes using SPD may include nodes outside the hierarchical core neighborhood, of which the labels are inconsistent with the node $i$, and lead to a wrong prediction.   
\end{proof}

\begin{proposition}
    For a semi-supervised binary node classification problem, suppose the label of each node $i\in V$ in the whole graph is determined by the majority vote of discriminative node features in the ``core neighborhood'': $S_*^i=\{j: \mathrm{D}_{i,j}=D^*\}$ for a certain $D^*$, where $\mathrm{D}_{i,j}$ is HDSE defined in (\ref{eqn: HDSE D_ij}). Then, for each node $i\in V$, a properly initialized one-layer graph transformer (i) \textbf{without HDSE} (ii) and \textbf{only aggregate nodes from $S_*^i$} can achieve the same generalization error as learning with a one-layer graph transformer (a) \textbf{with HDSE} (b) \textbf{aggregate all nodes in the graph without knowing $S_i^*$ in prior}. 
\end{proposition}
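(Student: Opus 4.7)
The plan is to reduce the HDSE-equipped transformer of (\ref{arch one-layer}) to the oracle transformer that aggregates only over $S_*^i$, by showing that after training the softmax attention essentially concentrates on the core neighborhood. By the one-hot encoding in (\ref{B one hot}), the bias term $\mathrm{B}(\mathrm{D}_{n,u})^\top b$ collapses to a single coordinate $b_{s(u)}$ of the trainable vector $b$, where $s(u)$ is the index of the distance class containing $\mathrm{D}_{n,u}$. This endows the HDSE transformer with a positional gate that can be trained independently of the content-based attention score, and I expect SGD to drive the coordinate $b_{s^*}$ associated with the core distance $D^*$ to dominate while the other coordinates stay bounded.

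With that picture in mind, I would carry out three steps. First, following the gradient dynamics underlying Proposition \ref{th3-ap} (and Theorem 4.1 of \cite{li2023what}), I would show that $b_{s^*}$ grows large while $\max_{s\neq s^*} b_s$ remains controlled, so that for every $u\notin S_*^i$ the attention weight is at most $\exp(-\Omega(b_{s^*}-\max_{s\neq s^*} b_s))$. Second, I would argue that in this regime the output of the HDSE transformer is $\epsilon$-close to that of an idealized transformer whose softmax is supported exactly on $S_*^i$. Third, since the idealized transformer is functionally identical to the oracle transformer without HDSE restricted to $S_*^i$ (the gate $b$ becomes a constant inside the restricted softmax and cancels by shift-invariance), the generalization-error derivation from \cite{li2023what} applies verbatim to both, giving the same $\epsilon$ bound.

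For the direction showing the oracle transformer alone suffices, the restricted model sees only class-relevant and confusion nodes inside $S_*^i$, and by assumption the label is the majority vote on these nodes. Hence the hypothesis and sample-complexity assumptions of \cite{li2023what} hold without any positional encoding, yielding the same $\epsilon$-generalization bound under identical batch size, step size, iteration count and label budget as those stated in Proposition \ref{th3-ap}. Matching these bounds on both sides is what establishes the equivalence the proposition claims.

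The main obstacle will be controlling the softmax leakage on nodes outside $S_*^i$ uniformly over the $T=\Theta(\eta^{-1/2})$ SGD iterations. Concretely, one has to show that the gap $b_{s^*}-\max_{s\neq s^*}b_s$ grows fast enough that the residual attention mass stays $o(\epsilon)$ before the iteration budget is exhausted, and without inflating the batch size $B$ or the label count $|\mathcal{L}|$ beyond the stated thresholds. This is a nontrivial dynamics argument, but it parallels the SPD analysis in \cite{li2023what}, now with a $Z$-dimensional bias whose discriminative coordinate is provably aligned with $D^*$ thanks to the majority-vote assumption on the hierarchical core neighborhood.
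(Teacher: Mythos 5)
Your proposal reaches the right conclusion, but it routes the argument through a step the paper never needs and that is genuinely the hardest part of your plan. The paper's proof works entirely from the oracle side: it fixes $\boldsymbol{b}=0$, restricts aggregation to $\mathcal{N}_{D^*}^n$, and observes (following Theorem 4.3 of \cite{li2023what}) that the attention-weight bounds on class-relevant nodes and the order-wise parameter updates (eqs.\ (63)--(64) and Lemmas 4, 5, 7 of \cite{li2023what}) are unchanged, so the restricted model inherits the same generalization bound that Proposition~\ref{th3-ap} already gives for the full HDSE model. The two models are never compared as functions; each is shown \emph{separately} to achieve the $\epsilon$ bound, which is all the statement asks. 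Your second paragraph-- the ``oracle transformer alone suffices'' direction --is essentially this entire proof. Your first direction, by contrast, tries to show the trained HDSE model's softmax concentrates on $S_*^i$ uniformly over the $T$ iterations so that it is $\epsilon$-close to the idealized restricted model; you correctly flag that controlling the gap $b_{s^*}-\max_{s\neq s^*}b_s$ and the residual leakage is nontrivial, but this functional-closeness claim is strictly stronger than what the proposition requires and is not established (nor needed) in \cite{li2023what}, whose guarantees concern attention concentration on class-relevant nodes and the resulting risk bound, not pointwise proximity of two independently trained networks. If you drop that detour and keep only the restricted-model analysis (plus the already-proved Proposition~\ref{th3-ap} for the HDSE side), your argument coincides with the paper's and avoids the obstacle you identified.
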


\begin{proof}
The proof follows Theorem 4.3 in \citep{li2023what}. When $\boldsymbol{b}=0$ is fixed during the training, but the nodes used for training and testing in aggregation for node $n$ are subsets of $\mathcal{N}_{D^*}^n$, the bound for attention weights on class-relevant nodes is still the same as in (63) and (64) of \citep{li2023what}. Given a known core neighborhood $S_*^n$, the remaining parameters follow the same order-wise update as Lemmas 4, 5, and 7. The remaining proof steps follow the contents in the proof of Theorem 4.1 of \citep{li2023what}, which leads to a generalization on a one-layer transformer with HDSE and aggregation with all nodes in the graph. 

\end{proof}

\newpage
\section{Experimental Details}\label{ap-b}
\subsection{Computing Environment} Our implementation is based on PyG \citep{fey2019fast} and DGL \cite{wang2019deep}. The experiments are conducted on a single workstation with 4 RTX 3090 GPUs and a quad-core CPU.

\begin{table*}[h]
	\centering
        \scriptsize
         \caption{Overview of the graph learning dataset used in this work \cite{dwivedi2023benchmarking,dwivedi2022long,kipf2017semisupervised,chien2020adaptive,pei2019geom,rozemberczki2021multi,hu2020open,mcauley2015inferring,leskovec2016snap}.}
	\begin{tabular}{lcccccccc}
		\toprule
		{Dataset} & {\# Graphs} & {Avg. \# nodes} & {Avg. \# edges} & {\# Feats} & {Prediction level} & {Prediction task} & {Metric}\\
		\midrule %
        ZINC & 12,000 & 23.2 & 24.9 &28 & graph & regression & MAE \\
        MNIST & 70,000 & 70.6 & 564.5 &3 & graph & 10-class classif. & Accuracy \\
        CIFAR10 & 60,000 & 117.6 & 941.1 &5& graph & 10-class classif. & Accuracy \\
        PATTERN & 14,000 & 118.9 & 3,039.3 &3& node & binary classif. & Accuracy \\
        CLUSTER & 12,000 & 117.2 & 2,150.9 &7&node & 6-class classif. & Accuracy \\
        \midrule %
        Peptides-func & 15,535 & 150.9 & 307.3 &9 & graph & 10-task classif. & AP \\
        Peptides-struct & 15,535 & 150.9 & 307.3 &9& graph & 11-task regression & MAE \\
        \midrule %
        Cora & 1 & 2,708 & 5,278 &2,708 & node & 7-class classif. & Accuracy \\
        Citeseer & 1 & 3,327 &4,522 &3,703 & node & 6-class classif. & Accuracy \\
        Pubmed & 1 & 19,717 & 44,324 &500 & node & 3-class classif. & Accuracy \\
        Actor & 1 & 7,600 & 26,659 &931 & node & 5-class classif. & Accuracy \\
        Squirrel & 1 & 5,201 & 216,933 &2,089 & node 
 & 5-class classif. & Accuracy \\
        Chameleon & 1 & 2,277 &36,101 &2,325 & node 
 & 5-class classif. & Accuracy \\
         \midrule %
         ogbn-proteins &1 &132,534 &39,561,252 & 8 &node &112 binary classif. & ROC-AUC\\ 
         ogbn-arxiv &1 &169,343 &1,166,243 &128 &node &40-class classif. & Accuracy\\ 
         arxiv-year &1 &169,343 &1,166,243 &128 &node &5-class classif. & Accuracy\\
         ogbn-products &2 & 2,449,029 & 61,859,140 &100 &node &47-class classif. & Accuracy\\
         ogbn-papers100M &1 &111,059,956 &1,615,685,872 &128 &node &172-class classif. & Accuracy\\

        \bottomrule
	\end{tabular}
	\label{tab:dataset}
\end{table*}

\subsection{Description of Datasets}
Table \ref{tab:dataset} presents a summary of the statistics and characteristics of the datasets. The initial five datasets are sourced from \cite{dwivedi2023benchmarking}, followed by two from \cite{dwivedi2022long}, and finally the remaining datasets are obtained from \cite{kipf2017semisupervised,chien2020adaptive,pei2019geom,rozemberczki2021multi,hu2020open,mcauley2015inferring,leskovec2016snap}. 
\begin{itemize}[leftmargin=*] 
    \item ZINC, MNIST, CIFAR10, PATTERN, CLUSTER, Peptides-func and Peptides-struct. For each dataset, we follow the standard train/validation/test splits and evaluation metrics in \cite{rampavsek2022recipe}. For more comprehensive details, readers are encouraged to refer to \cite{rampavsek2022recipe}.
    \item Cora, Citeseer, Pubmed, Actor, Squirrel, Chameleon, ogbn-proteins, ogbn-arxiv, ogbn-products and ogbn-papers100M. For each dataset, we use the same train/validation/test splits and evaluation metrics as \cite{wu2023simplifying}. For detailed information on these datasets, please refer to \cite{wu2023simplifying}.
    \item Arxiv-year is a citation network among all computer science arxiv papers, as described by \cite{lim2021large}. In this network, each node corresponds to an arxiv paper, and the edges indicate the citations between papers. Each paper is associated with a 128-dimensional feature vector, obtained by averaging the word embeddings of its title and abstract. The word embeddings are generated using the WORD2VEC model. The labels of arxiv-year are publication years clustered into fve intervals. We use the public splits shared by \cite{lim2021large}, with a train/validation/test split ratio of 50\%/25\%/25\%.
\end{itemize}

\begin{table*}[h]
	\centering
 \footnotesize
         \caption{Hyperparameters of GraphGPS + HDSE for five datasets from \cite{dwivedi2023benchmarking}.}
	\begin{tabular}{lcccccc}
		\toprule
		{Hyperparameter} & {ZINC} & {MNIST} & {CIFAR10} & {PATTERN} & {CLUSTER}\\
		\midrule %
            \# GPS Layers & 10 & 3 & 3 & 6 & 16\\
            Hidden dim & 64 & 52 & 52 & 64 & 48\\
            GPS-MPNN & GINE & GatedGCN & GatedGCN & GatedGCN & GatedGCN\\
            GPS-GlobAttn & Transformer & Transformer & Transformer & Transformer & Transformer\\
            \# Heads & 4 & 4 & 4 & 4 & 8 \\
            Attention dropout & 0.5 & 0.5 & 0.5 & 0.5 & 0.5\\
            Graph pooling & sum & mean & mean & – & –\\
            \midrule %
            Positional Encoding & RWSE-20 & LapPE-8 & LapPE-8 & LapPE-16 & LapPE-10\\
            PE dim & 28 & 8 & 8 & 16 & 16\\
            PE encoder & linear & DeepSet & DeepSet & DeepSet & DeepSet\\
            Batch size & 32 & 16 & 16 & 32 & 16\\
            \midrule %
            Learning Rate & 0.001 & 0.001 & 0.001 & 0.0005 & 0.0005\\
            \# Epochs & 2000 & 100 & 200 & 100 & 100\\
            \# Warmup epochs & 50 & 5 & 5 & 5 & 5\\
            Weight decay & 1e-5 & 1e-5 & 1e-5 & 1e-5 & 1e-5\\
            \midrule %
            $K$ & 1 & 1 & 1 & 1 & 1\\
            Coarsening algorithm & Newman & Louvain & Louvain & Loukas ($\alpha=0.1$) & Louvain\\
            \midrule %
            \# Parameters & 437,389 & 124,565& 121,913 & 352,695 & 517,446\\
        \bottomrule
	\end{tabular}
	\label{tab:dataset1}
\end{table*}

\begin{table}[h]
        \caption{Hyperparameters of GraphGPS + HDSE for two LRGB datasets from \cite{dwivedi2022long}.}
        \vspace{0.1 in}
	\centering
 \footnotesize
	\begin{tabular}{lcc}
		\toprule
		{Hyperparameter} &Peptides-func & Peptides-struct\\
		\midrule %
            \# GPS Layers & 4 & 4\\
            Hidden dim & 96 & 96 \\
            GPS-MPNN & GatedGCN & GatedGCN\\
            GPS-GlobAttn & Transformer & Transformer\\
            \# Heads & 4 & 4 \\
            Attention dropout & 0.5 & 0.5\\
            Graph pooling & mean & mean \\
            \midrule %
            Positional Encoding & LapPE-10 & LapPE-10 \\
            PE dim & 16 & 16 \\
            PE encoder & DeepSet & DeepSet \\
            Batch size & 16 & 128 \\
            \midrule %
            Learning Rate & 0.0003 & 0.0003 \\
            \# Epochs & 200 & 200 \\
            \# Warmup epochs & 5 & 5 \\
            Weight decay & 0 & 0 \\
            \midrule %
            $K$ & 0 & 1 \\
            Coarsening algorithm & Newman & METIS ($\alpha=0.1$) \\
            \midrule %
            \# Parameters & 505,866 & 506,235 \\
        \bottomrule
	\end{tabular}
	\label{tab:dataset2}
\end{table}

\subsection{Hyperparameter and Reproducibility} 
\textbf{Models + HDSE.}
For fair comparisons, we use the same hyperparameters (including training schemes, optimizer, number of layers, batch size, hidden dimension etc.) as baseline models for all of our HDSE versions. Taking GraphGPS + HDSE as an example, Tables \ref{tab:dataset1} and \ref{tab:dataset2} showcase the corresponding hyperparameters and coarsening algorithms. It is important to note that our HDSE module introduces only a small number of additional parameters. And each experiment is repeated 5 times to get the mean value and error bar.

\textbf{GOAT + HDSE.}
To accelerate training, we do not adopt the neighbor sampling (NS) method from GOAT to sample neighbors; instead, we train directly on the entire graph. For graphs with over one million nodes, we randomly sample nodes within the graph and select their induced subgraph for batch training. For the hyperparameter selections of our high-level HDSE model, in addition to what we have covered in the setting part of the experiment section that datasets share in common, we list other settings in Table~\ref{tab:HDSE-parameter}. It's important to note that our hyperparameters were determined within the SGFormer's grid search space. Furthermore, all other experimental parameters, including dropout, batch size, training schemes, optimizer, etc., are consistent with those used in the SGFormer \cite{wu2019simplifying}. The testing accuracy achieved by the model that reports the highest result on the validation set is used for evaluation. And each experiment is repeated 10 times to get the mean value and error bar.

\textbf{SGFormer on Graph-level Tasks.} To accurately demonstrate the capabilities of SGFormer on these datasets, we use all the same experimental settings and conduct the same grid search as outlined in GraphGPS \cite{rampavsek2022recipe}.

\begin{table*}[h]
	\centering
        \scriptsize
         \caption{GOAT + HDSE dataset-specific hyperparameter settings.}
	\begin{tabular}{lcccccccccc}
		\toprule
		{Dataset} &$K$ & $|V^{1}|$ & Hidden dim & \# Heads & \# Glob. Layers & Local GNN & \# GNN Layers & \# Epochs  & LR \\
        \midrule %
        Cora & 1& 32 & 128 & 4 & 1 & GCN & 2  &500 & 1e-2\\
        Citeseer & 1& 200 &128 &2 &1 & GCN &2  &500 &1e-2 \\
        Pubmed & 1& 64 & 128 &1 &1 &GCN &2  &500 &1e-2 \\
        Actor & 1& 200 &128 &2 &1 &GCN &2 &1000 &1e-2  \\
        Squirrel & 1& 128 & 128 &1 &3 &GCN &2  &500 &1e-2 \\
        Chameleon & 1& 32 &128 &1 &3 & GCN &2 &500 &1e-2 \\
         \midrule %
         ogbn-proteins & 1& 1024 & 128 &2 &1 & GraphSAGE &4  &1000 &5e-4\\ 
         ogbn-arxiv & 1& 1024 &256 & 1 &1 & GCN &7  &2000&5e-4\\ 
         arxiv-year & 1& 2048 & 128&4&1&GAT&1 &500&1e-3\\
         ogbn-products & 2 &1024 &256 & 4 &1 & GraphSAGE &5  &1000&3e-3\\
         ogbn-100M & 1&1024 &256 & 1 &1 & GCN &3 &50&1e-3 \\

        \bottomrule
	\end{tabular}
	\label{tab:HDSE-parameter}
\end{table*}

\newpage
\section{Additional Experimental Results}\label{ap-c}
\subsection{Cora, Citeseer and Pubmed} 

\begin{table*}[t]
    \centering
    {
    \caption{\textbf{Node classification} on Cora, Citeseer and Pubmed (\%).}
    \begin{tabular}{c|cccccc|cccccc}
        \toprule
        Model    & Cora       & CiteSeer   & PubMed     \\
         \midrule 
        \# nodes    & 2,708      & 3,327      & 19,717     \\
        \# edges & 5,278 & 4,552 & 44,324  \\
         & Accuracy$\uparrow$ & Accuracy$\uparrow$ & Accuracy$\uparrow$\\
        \midrule %
        GCN        & 81.6 ± 0.4 & 71.6 ± 0.4 & 78.8 ± 0.6 \\
        GAT        & 83.0 ± 0.7 & 72.1 ± 1.1 & 79.0 ± 0.4  \\
        SGC        & 80.1 ± 0.2 & 71.9 ± 0.1 & 78.7 ± 0.1 \\
        JKNet &81.8 ± 0.5 &70.7 ± 0.7 &78.8 ± 0.7  \\
        APPNP &83.3 ± 0.5 &71.8 ± 0.5 &80.1 ± 0.2 \\
        SIGN &82.1 ± 0.3 &72.4 ± 0.8 &79.5 ± 0.5 \\
        HC-GNN & 81.9 ± 0.4          & 72.5 ± 0.6          & 80.2 ± 0.6           \\
        \midrule 
        Graphormer &75.8 ± 1.1 &65.6 ± 0.6 &OOM \\
        SAT & 72.4 ± 0.3 & 60.9 ± 1.3 & OOM \\
        \midrule 
        ANS-GT &79.4 ±  0.9 &64.5 ±  0.7 &77.8 ±  0.7 \\
        AGT &81.7 ± 0.4 &71.0 ± 0.6 & - \\
        HSGT &83.6 ±  1.8 &67.4 ±  0.9 &79.7 ±  0.5 \\
        \midrule 
        GraphGPS & 76.5 ± 0.6 & - & 65.7 ± 1.0 \\
        Gapformer       & 83.5 ± 0.4          & 71.4 ± 0.6          & 80.2 ± 0.4            \\
        NodeFormer & 82.2 ± 0.9 & 72.5 ± 1.1 & 79.9 ± 1.0 \\
        SGFormer   & 84.5 ± 0.8 & 72.6 ± 0.2 & 80.3 ± 0.6   \\
        \midrule 
        GOAT       &82.1 ± 0.9  & 71.6 ± 1.3 & 78.9 ± 1.5 \\
        \textbf{GOAT + HDSE}       &83.9 ± 0.7  & 73.1 ± 0.7 & 80.6 ± 1.0   \\
        \bottomrule
    \end{tabular}
    \label{tab:tab-cora}}

\end{table*}

\subsection{Sensitivity Analysis of Maximal Hierarchy Level $K$}

\begin{table}[h]
\footnotesize
\caption{Sensitivity analysis on the maximum hierarchy level $K$ of GraphGPS + HDSE on ZINC.}
	\centering
	\begin{tabular}{lccc}
		\toprule
		  & $K=0$ (SPD) & $K=1$ & $K=2$\\
            \midrule %
             ZINC  $\downarrow$ & 0.069 $\pm$ \footnotesize{0.003}  & 0.062 $\pm$ \footnotesize{0.003} &  0.064 $\pm$ \footnotesize{0.004} \\
		\bottomrule
	\end{tabular}
	\label{tab:tab6}
\end{table}

\begin{table}[h]
\footnotesize
\caption{Sensitivity analysis on the maximum hierarchy level $K$ of GOAT + HDSE.}
	\centering
	\begin{tabular}{cccc}
		\toprule
		  &Squirrel $\uparrow$ & arxiv-year $\uparrow$ & ogbn-arxiv $\uparrow$\\
            \midrule %
              $K=1$ & 43.2 $\pm$ 2.4  & 54.23 $\pm$ 0.26 & 73.26 $\pm$ 0.19 \\
              $K=2$ & 43.8 $\pm$ 2.1  & 54.51 $\pm$ 0.17 & 73.36 $\pm$ 0.15 \\
		\bottomrule
	\end{tabular}
	\label{tab:tab7}
\end{table}

We conduct a sensitivity analysis on the maximum hierarchy level $K$ on ZINC. The results are shown in Table \ref{tab:tab6}. Note that when $K=0$, HDSE degenerates into SPD, leading to a worse performance. This result of $K$ is about graph classification tasks, where the size of graphs is typically small. Therefore, at level 1 ($K=1$), the quantity of coarsened nodes is quite small, eliminating the necessity for a higher $K$. We further investigate the impact of $K$ on large-graph node classification, across 3 datasets: Squirrel, arxiv-year, and ogbn-arxiv. Based on the results displayed in Table \ref{tab:tab7}, we can make the following observations: (1) $K=1$ does not consistently yield the best results. Optimal performance is achieved with $K=2$ on some datasets. (2) The improvement brought about by $K=2$ over $K=1$ is relatively minor. 

The variation in the optimal $K$ could stem from the distinct hierarchical structures inherent in different graphs. Larger graphs may possess more pronounced multi-level structures, thus necessitating a higher $K$. However, the slight improvement resulting from a larger $K$ could suggest limitations in the coarsening algorithm.

This study reinforces our selection of $K=1$, aligning with results from other hierarchical graph transformer papers such as HSGT \cite{zhu2023hierarchical} and ANS-GT \cite{zhang2022hierarchical}. We anticipate that the real potential of a higher $K$ will be revealed through the application of a proper, effective coarsening algorithm on graphs with hierarchical community structures. We look forward to exploring this in the future.

\subsection{Sensitivity Analysis of Maximum Distance Length $L$}

\begin{table}[h]
\scriptsize
\centering
\caption{Overview of the graph diameters of datasets used in graph classification}
\label{diameter}
\begin{tabular}{cccccccc}
		\toprule
		   & ZINC & MNIST & CIFAR10 & PATTERN & CLUSTER & Peptides-func & Peptides-struct \\
            \midrule %
              Average Diameter & 12.47 & 6.85 & 9.17 & 2.00 & 2.17 & 56.86 & 56.86 \\ 
                Maximum Diameter & 22 & 8 & 12 & 3 & 3 & 159 & 159 \\
		\bottomrule
	\end{tabular}
 
\end{table}

For each graph classification dataset, we calculate the graph diameter of each graph in the dataset and then compute the average graph diameter and maximum graph diameter for the entire dataset, as detailed in Table \ref{diameter}. Note that we use high-level HDSE to deal with node classification on large graphs; therefore, we do not calculate the distances between the nodes in these large graphs. The data indicates $L=30$ is a reasonable choice, as it encompasses most of the graph diameters. We do not use a larger number as we hypothesize that for graphs with larger diameters, the utility of detailed information loses significance beyond a certain distance.

Additionally, we conducted a sensitivity analysis regarding the selection of $L$, as outlined in Table \ref{tab:tab8}, which confirms that $L=30$ is an appropriate choice.

\begin{table}[h]
\footnotesize
\caption{Sensitivity analysis on the maximum distance length $L$.}
	\centering
	\begin{tabular}{cccc}
		\toprule
		  &Peptides-func $\uparrow$ & Peptides-struct $\uparrow$ \\
            \midrule %
              GraphGPS + HDSE $(L=20)$ & 0.7105 $\pm$ 0.0051  & 0.2481 $\pm$ 0.0016  \\
              GraphGPS + HDSE $(L=30)$ & 0.7156 $\pm$ 0.0058  & 0.2457 $\pm$ 0.0013  \\
              GraphGPS + HDSE $(L=50)$ & 0.7124 $\pm$ 0.0053  & 0.2466 $\pm$ 0.0021  \\
		\bottomrule
	\end{tabular}
	\label{tab:tab8}
\end{table}

\subsection{Coarsening Runtime} 
Table \ref{tab:runtime-e} gives the runtime of coarsening algorithms (including distance calculation) on graph-level tasks, illustrating the practicality of our method. The Newman algorithm is unsuited for larger graphs due to high complexity. In addition, our HDSE module almost does not increase the runtime of the baselines. For example, GraphGPS runs at 10 seconds per epoch, compared to 11 seconds per epoch with HDSE module on ZINC.

Additionally, for all large-scale graphs, we employ METIS due to its efficiency with a time complexity of $O(|E|)$. This makes it highly effective for partitioning extensive graphs, such as ogbn-products, in less than 5 minutes, and even the vast ogbn-papers100M, with a size of 0.1 billion nodes, requires only 59 minutes.

\begin{table}[h]
	\centering
	\caption{Empirical runtime of coarsening algorithms.}
	\begin{tabular}{ccccc}
		\toprule
		 Algorithm & ZINC & PATTERN &MNIST &P-func\\
		\midrule %
		METIS  & 31s & 0.1h & 0.2h & 0.1h\\
            Newman  & 88s & $>$500h & 18h & 1.6h\\
		Louvain & 76s & 5h & 1.6h & 1.1h\\
		\bottomrule
	\end{tabular}
	\label{tab:runtime-e}
\end{table}

\begin{table}[h]
\caption{Node classification on synthetic community datasets.}
	\centering
	\begin{tabular}{lccc}
		\toprule
		  Dataset & GT & GT + SPD & GT + HDSE\\
            \midrule %
             Community-small & 64.7 $\pm$ 1.1  & 81.5 $\pm$ 1.7 &  88.6 $\pm$ 0.9 \\
		\bottomrule
	\end{tabular}
	\label{tab:syn}
\end{table}

\subsection{Synthetic Community Dataset} 
We evaluate the \textbf{Community-small} dataset from GraphRNN \cite{you2018graphrnn}, a synthetic dataset featuring community structures. It comprises 100 graphs, each with two distinct communities. These communities are generated using the Erdos-Renyi model (E-R). Node features are generated from random numbers and node labels are determined by their respective cluster numbers with accuracy as the chosen evaluation metric. We use the a random train/validation/test split ratio of
60\%/20\%/20\%. 

We select the Louvain method as our coarsening algorithm and integrate the HDSE module into the Graph Transformer (GT). As shown in Table \ref{tab:syn}, the GT struggles to detect such structures; and solely utilizing SPD proves inadequate; however, our HDSE, enriched with coarsening structural information, effectively captures these structures.

\subsection{ANS-GT + HDSE} 

We validate the performance of our HDSE framework using the efficient ANS-GT \cite{zhang2022hierarchical}, 
which uses a multi-armed bandit algorithm to adaptively sample nodes for attention. 
We use the Louvain method as our coarsening algorithm. And for each pair of nodes sampled adaptively by the ANS-GT, we calculate their HDSE and bias the attention computation. 
For fair comparisons, we tune the hyperparameters using the same grid search as reported in their paper \cite{zhang2022hierarchical}. Note that we report the supervised learning setting (different from the text), since this is the one considered in the ANS-GT \cite{zhang2022hierarchical}.
Overall, Table~\ref{tab:ansgt} shows that HDSE yields consistent performance improvements, even in this challenging scenario, where nodes are sampled. 

\begin{table*}[h]
    \centering
    
    \caption{\textbf{Node classification accuracy} on ANS-GT + HDSE (\%).
     The baseline results were taken from \citep{zhang2022hierarchical}. We apply 3 independent runs on random data splitting. $^+$ indicates the results obtained from our re-running.}
    \begin{tabular}{c|ccccc}
        \toprule
        Model & Cora & Citeseer & Pubmed \\
        \midrule %
        GCN & 87.33 $\pm$ 0.38 & 79.43 $\pm$ 0.26 & 84.86 $\pm$ 0.19 \\
        GAT & 86.29 $\pm$ 0.53 & \textbf{80.13 $\pm$ 0.62} & 84.40 $\pm$ 0.05 \\
        APPNP & 87.15 $\pm$ 0.43 & 79.33 $\pm$ 0.35 & 87.04 $\pm$ 0.17 \\
        JKNet & 87.70 $\pm$ 0.65 & 78.43 $\pm$ 0.31 & 87.64 $\pm$ 0.26 \\
        H2GCN & 87.92 $\pm$ 0.82 & 77.60 $\pm$ 0.76 & 89.55 $\pm$ 0.14 \\
        GPRGNN & 88.27 $\pm$ 0.40 & 78.46 $\pm$ 0.88 & 89.38 $\pm$ 0.43 \\
        \midrule %
        GT & 71.84 $\pm$ 0.62 & 67.38 $\pm$ 0.76 & 82.11 $\pm$ 0.39 \\
        SAN & 74.02 $\pm$ 1.01 & 70.64 $\pm$ 0.97 & 86.22 $\pm$ 0.43 \\
        Graphormer & 72.85 $\pm$ 0.76 & 66.21 $\pm$ 0.83 & 82.76 $\pm$ 0.24 \\
        Gophormer & 87.65 $\pm$ 0.20 & 76.43 $\pm$ 0.78 & 88.33 $\pm$ 0.44 \\
        Coarformer & 88.69 $\pm$ 0.82 & 79.20 $\pm$ 0.89 & 89.75 $\pm$ 0.31 \\
        \midrule %
        ANS-GT & 88.60 $\pm$ 0.45 & 77.75 $\pm$ 0.79$^+$ & 89.56 $\pm$ 0.55 \\
        \textbf{ANS-GT + HDSE}  & \textbf{89.67 $\pm$ 0.39} & 78.31 $\pm$ 0.58 & \textbf{90.63 $\pm$ 0.26} \\
        \bottomrule
    \end{tabular}
    \label{tab:ansgt}

\end{table*}

\subsection{Clustering Coefficients Analysis} 
We check if there is a correlation with the cluster structure according to \cite{hu2020open}, by computing clustering coefficients on five benchmarks from \cite{dwivedi2023benchmarking}, but we do not observe a direct correlation. Notably, the ZINC dataset, which comprises small molecules, has a low clustering coefficient; however, our HDSE shows a significant improvement on it. This improvement could be attributed to the HDSE capturing chemical motifs that cannot be captured by the clustering coefficient, as illustrated in Figure \ref{fig:HDSE}. 
\begin{table*}[h]
\scriptsize
	\centering
        \vspace{-0.1 in}
         \caption{Clustering Coefficients Analysis.}
	\begin{tabular}{lccccc}
		\toprule
		{Model} & {ZINC} & {MNIST} & {CIFAR10} & {PATTERN} & {CLUSTER}\\
		& MAE $\downarrow$ & Accuracy $\uparrow$ & Accuracy $\uparrow$ & Accuracy $\uparrow$ & Accuracy $\uparrow$\\
             \midrule %
         Average Clust. Coeff. & 0.006 & 0.477 & 0.454 & 0.427 & 0.316\\
        \midrule %
        GT & 0.226 $\pm$ 0.014 &  90.831 $\pm$ 0.161  &  59.753 $\pm$ 0.293 & 84.808 $\pm$ 0.068  &  73.169 $\pm$ 0.622 \\
        \textbf{GT + HDSE}  & 0.159 $\pm$ 0.006 &  94.394 $\pm$ 0.177 &  64.651 $\pm$ 0.591 & 86.713 $\pm$ 0.049 &  74.223 $\pm$ 0.573   \\
        \midrule %
        SAT & 0.094 $\pm$ 0.008 &  – &  – & 86.848 $\pm$ 0.037 & 77.856 $\pm$ 0.104  \\
        \textbf{SAT + HDSE}  & 0.084 $\pm$ 0.003 &  – &  – & 86.933 $\pm$ 0.039 & {78.513 $\pm$ 0.097 } \\
        \midrule %
        GraphGPS & 0.070 $\pm$ 0.004 & 98.051 $\pm$ 0.126 & 72.298 $\pm$ 0.356 & 86.685 $\pm$ 0.059 & 78.016 $\pm$ 0.180 \\
        \textbf{GraphGPS + HDSE} & 0.062 $\pm$ 0.003 & 98.367 $\pm$ 0.106 & 76.180 $\pm$ 0.277 & 86.737 $\pm$ 0.055 & 78.498 $\pm$ 0.121 \\
        \bottomrule
	\end{tabular}
    \vspace{-0.1 in}
	\label{tab:tab2-ap}
\end{table*}

\section{HDSE Visualization}
Here, we demonstrate that our HDSE method also provides interpretability compared to the classic GT. We train the GT + HDSE and GT on ZINC and Peptides-func graphs, and compare the attention scores between the selected node and other nodes.
Figure \ref{fig:attention} visualizes the attention scores on ZINC and Peptides-func. The results indicate that, after integrating the HDSE bias, the attention mechanism tends to focus on certain community structures rather than individual nodes as seen in classic attention. This phenomenon demonstrates our method's capacity to capture multi-level hierarchical structures. 

\begin{figure}[htb]  \center{\includegraphics[width=13cm]  {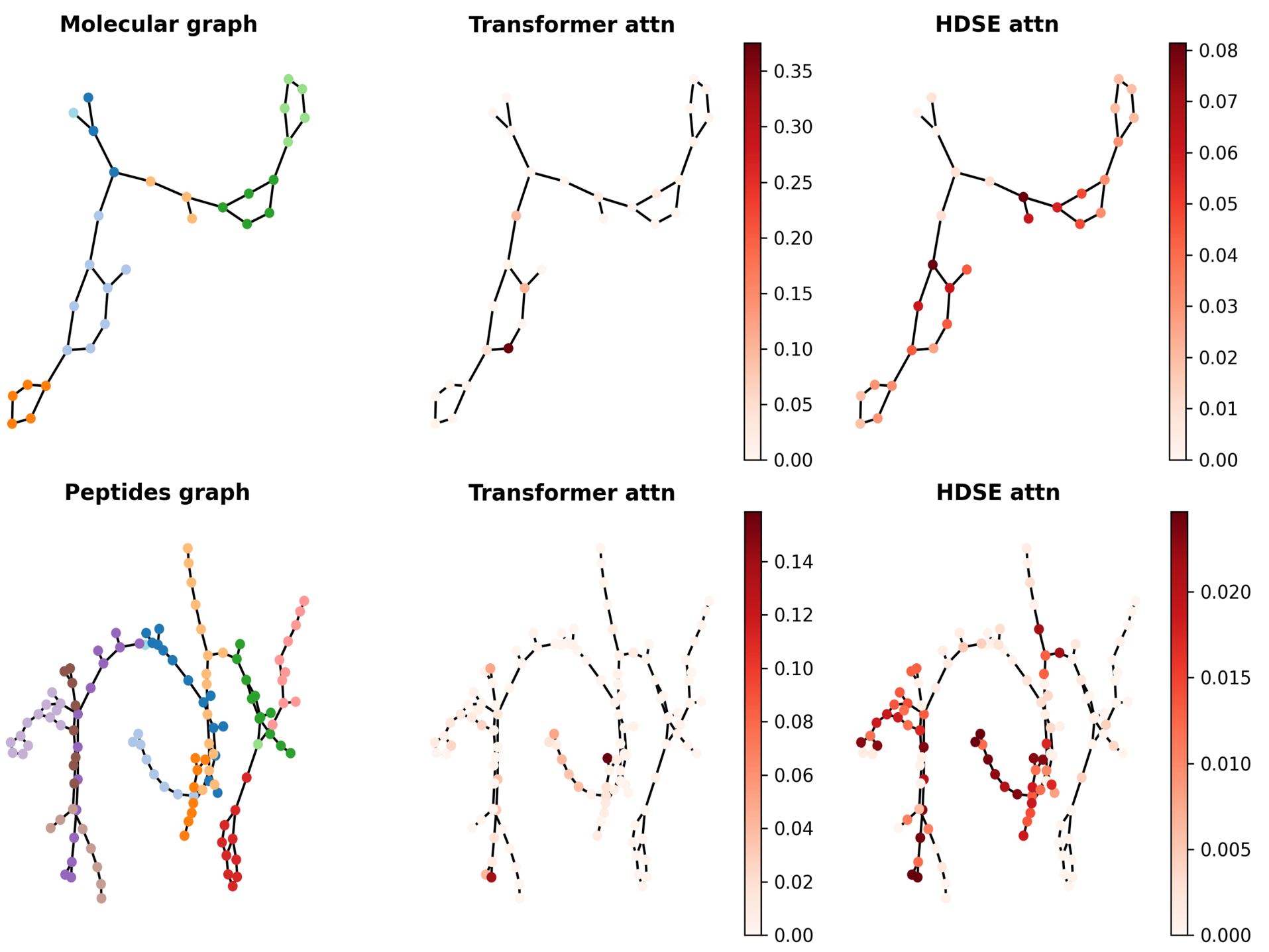}}   \caption{Visualization of attention weights for the transformer attention and HDSE attention. The left side illustrates the graph coarsening result. The center column displays the attention weights of a sample node learned by the classic GT \cite{dwivedi2020generalization}, while the right column showcases the attention weights learned by the HDSE attention.} \label{fig:attention} \end{figure}

\section{Additional Discussion} \label{ap-e}
\textbf{Positional Encoding or Structural Encoding?} 

We would like to clarify that our HDSE method aligns with the definitions of structural encoding. While GraphGPS \cite{rampavsek2022recipe} does classify SPD encoding under relative positional encoding, it defines relative structural encoding as "allow two nodes to understand how much their structures differ". Given that our HDSE not only incorporates SPD information but also embeds multi-level graph hierarchical structures, it is reasonable to classify it under the category of structural encoding. 

\textbf{Limitations.} In larger graphs, the presence of multi-level structures may require a higher maximal hierarchy level, $K$. The marginal improvements observed with increased $K$ may indicate limitations in the coarsening algorithm. We anticipate that the real potential of a higher $K$ will be revealed through the application of a proper, effective coarsening algorithm on graphs with hierarchical community structures. We look forward to exploring this in the future.

\textbf{Broader Impacts.} This paper presents work whose goal is to advance the field of Machine Learning. There are many potential societal consequences of our work, none which we feel must be specifically highlighted here.

\section{Further Related Works}
\textbf{Graph Transformers over Clustering Pooling.} \cite{pmlr-v202-kong23a} employs a hybrid approach that integrates a neighbor-sampling local module with a global module, the latter featuring a trainable, fixed-size codebook obtained by K-Means to represent global centroids, which is noted for its efficiency. Meanwhile, Gapformer \cite{liu2023gapformer} involves the incorporation of a graph pooling layer designed to refine the key and value matrices into pooled key and value vectors through graph pooling operations. This approach aims to minimize the presence of irrelevant nodes and reduce computational demands. However, the performance of these methods remains constrained due to a lack of effective inductive biases.

\textbf{Graph Transformers over Virtual Nodes.} Several graph transformer models utilize anchor nodes or virtual nodes for message propagation. For instance, Graphormer \cite{ying2021transformers} introduces a virtual node and establishes connections between the virtual node and each individual node. AGFormer \cite{jiang2023agformer} selects representative anchors and transforms node-to-node message passing into an anchor-to-anchor and anchor-to-node message passing process. Additionally, AGT \cite{ma2023rethinking} extracts structural patterns from subgraph views and designs an adaptive transformer block to dynamically integrate attention scores in a node-specific manner.

\end{document}